\algrenewcommand\algorithmicindent{0.8em}
\pgfplotsset{compat=newest}
\DeclareMathAlphabet{\mymathbb}{U}{BOONDOX-ds}{m}{n}
\def\@xnamedef#1{\expandafter\protected@xdef\csname #1\endcsname}
\def\no@harm{} 
\def\ead@au#1{\protected@edef\@ead@au{#1}}
\patchcmd\runningauthor@fmt{\global\edef}{\protected@xdef}{}{}
\patchcmd\runningauthor@fmt{\global\edef}{\protected@xdef}{}{}
\patchcmd\author@fmt{\edef}{\protected@edef}{}{}
\patchcmd\add@xtok{\xdef}{\protected@xdef}{}{}
\DeclareRobustCommand{\qed}{%
  \ifmmode 
  \else \leavevmode\unskip\penalty9999 \hbox{}\nobreak\hfill
  \fi
  \quad\hbox{\qedsymbol}}
\newcommand{\openbox}{\leavevmode
  \hbox to.77778em{%
  \hfil\vrule
  \vbox to.675em{\hrule width.6em\vfil\hrule}%
  \vrule\hfil}}
\newcommand{\qedsymbol}{\openbox}
\newenvironment{proof}[1][\proofname]{\par
  \normalfont
  \topsep6\p@\@plus6\p@ \trivlist
  \item[\hskip\labelsep\itshape
    #1.]\ignorespaces
}{%
  \qed\endtrivlist
}
\newcommand{\proofname}{Proof}
\newcounter{thms}
\newtheorem{theorem}[thms]{Theorem}
\newcounter{lems}
\newtheorem{lemma}[lems]{Lemma}
\newcounter{cors}
\newtheorem{corollary}[cors]{Corollary}
\newtheorem{proposition}[thms]{Proposition}
\newcounter{defsex}
\newtheorem{definition}[defsex]{Definition}
\newtheorem{example}[defsex]{Example}
\newcounter{asses}
\newtheorem{assumption}[asses]{Assumption}
\newcommand{\symi}{h}
\newcommand{\symk}{j}
\newcommand{\symt}{t}
\newcommand{\symkit}{k}
\newcommand{\symc}{c}
\newcommand{\symC}{C}
\newcommand{\symf}{f}
\newcommand{\symF}{F}
\newcommand{\symh}{i}
\newcommand{\symj}{\ell}
\newcommand{\symhb}{\symh}
\newcommand{\R}{\mathbb{R}}
\newcommand{\bb}[1]{\mymathbb{#1}}
\newcommand{\mc}[1]{\mathcal{#1}}
\newcommand{\mt}[1]{\textrm{#1}}
\newcommand{\lt}{\left}
\newcommand{\rt}{\right}
\newcommand{\beeq}{\begin{equation}}
\newcommand{\eneq}{\end{equation}}
\newcommand{\matb}{\begin{matrix}}
\newcommand{\mate}{\end{matrix}}
\newcommand{\unaryminus}{\scalebox{0.6}[1.0]{$ - $}}
\DeclareMathOperator*{\argmax}{arg\,max}
\DeclareMathOperator*{\argmin}{arg\,min}
\definecolor{fillred}{RGB}{242, 210, 208}
\definecolor{fillgreen}{RGB}{216, 232, 210}
\definecolor{fillblue}{RGB}{210, 223, 236}
\definecolor{fillorange}{RGB}{251, 227, 208}
\definecolor{redtmp}{rgb}{0, 0, 0}
\definecolor{redrev1}{rgb}{0, 0, 0}
\begin{document}

\begin{frontmatter}

\title{
Maximum likelihood inference for high-dimensional problems with multiaffine variable relations\thanksref{footnoteinfo}
}

\thanks[footnoteinfo]{This research is supported by the Swiss National Science Foundation under the NCCR Automation (grant agreement 51NF40\_180545).}

\author[epfl]{Jean-S\'{e}bastien Brouillon}\ead{jean-sebastien.brouillon@epfl.ch},    
\author[ethz]{Florian D\"{o}rfler}\ead{dorfler@control.ee.ethz.ch},               
\author[epfl]{Giancarlo Ferrari-Trecate}\ead{ giancarlo.ferraritrecate@epfl.ch}  

\address[epfl]{Institute of Mechanical Engineering, \'Ecole Polytechnique F\'ed\'erale de Lausanne, Switzerland}  
\address[ethz]{Automatic Control Laboratory, Swiss Federal Institute of Technology (ETH), Switzerland}

\begin{keyword}                       
maximum likelihood estimation, 
identification methods, estimation theory, nonlinear models, Bayesian networks
\end{keyword}


\begin{abstract}
Maximum Likelihood Estimation of continuous variable models can be very challenging in high dimensions, due to potentially complex probability distributions. The existence of multiple interdependencies among variables can make it very difficult to establish convergence guarantees. This leads to a wide use of brute-force methods, such as grid searching and Monte-Carlo sampling and, when applicable, complex and problem-specific algorithms. In this paper, we consider inference problems where the variables are related by multiaffine expressions. We propose a novel Alternating and Iteratively-Reweighted Least Squares (AIRLS) algorithm, and prove its convergence \textcolor{redrev1}{for problems with Generalized Normal Distributions}. We also provide an efficient method to compute the variance of the estimates obtained using AIRLS. Finally, we show how the method can be applied to graphical statistical models. We perform numerical experiments on several inference problems, showing significantly better performance than state-of-the-art approaches in terms of scalability, robustness to noise, and convergence speed due to an empirically observed super-linear convergence rate.
\end{abstract}

\end{frontmatter}


\section{Introduction} \label{section_intro}
Statistical inference is widely used in many disciplines such as environmental sciences, economics, energy systems, and control theory \citep{survey_environment_bns, water_bn, econ_iv_graph, survey_energy_bns, control_building_bn}. More precisely, inference allows learning and predicting model variables from noisy observations, sometimes also providing a measure of the predictive uncertainty. However, fitting a model to data generally leads to complex inference problems. Furthermore, specific variables of interest often need to be estimated although some latent (unobserved) variables are unknown. 

Among inference methods, Maximum Likelihood Estimation (MLE) is very popular for its consistency and efficiency properties \citep{statistical_inference_book, mle_consistency}. The computation of MLE estimates can be fairly simple for problems where few variables need to be inferred, but becomes much harder when their numbers increase \citep{proba_ml_book}. Rather than inferring the value of latent variables, one can marginalize the likelihood over them, like in Expectation-Maximization (EM). Marginalizations may improve the estimation accuracy, but are often not tractable when the latent variables are high-dimensional or do not take discrete values \citep{tractable_em}. 
 
In MLE, if the probability distributions of all continuous variables are Gaussian, the likelihood can often be maximized using standard convex optimization techniques such as Gradient Descent (GD) and ADMM \citep{admm_def}. Additionally, several methods have been developed for problems where the relations between the Gaussian random variables follow complex graph structures \citep{gaussian_learn_rain, gaussian_learn_mixed, gaussian_learn_design}. \textcolor{redrev1}{More generally, proximal methods and Iteratively Reweighted Least Squares (IRLS) provide a simple adaptation to Generalized Normal Distributions (GNDs, also called exponential power distributions), but only in specific cases \citep{beck2017_prox_methods}. Moreover, current methods focus on problems where all GNDs have the same exponent, which amount to minimizing a norm.}

Many inference problems contain multiaffine relations between \textcolor{redrev1}{GND-distributed random} variables. For example, the unknown parameters in both Error-In-Variables (EIV) and rank-constrained tensor regression models relate in a multi-linear way \citep{tensor_eiv, tensor_imaging_noeiv}. Signal processing and system identification problems can also present multiaffine relations, e.g., when unknown filter or system parameters multiply with unmeasured disturbances \citep{sysid_ml_multilin, soderstrom_book}. \textcolor{redrev1}{Additionally, \cite{lq_norm_optimization} have shown that adapting the exponent of GNDs to a problem can substantially improve the goodness of fit.}

The aforementioned challenges explain why almost all practical MLE methods for complex problems are based on Monte Carlo sampling and discretization (i.e., grid search) \citep{discretization_def, discretization_multi,sampling_book}. Although these two approaches are simple to implement and provide an approximate probability distribution for the estimates, they are dependent on the sampling distributions or discretization strategies used, which may impact the estimation accuracy \citep{discretization_def}. The computational complexity of these methods, growing rapidly with both the desired accuracy and the dimension of the problem, is the most important limitation, which prevents their use in many applications. Other zeroth-order methods such as the Gradient-Less Descent (GLD) can help to reduce the computational burden in high dimensions, but display slow convergence \citep{zero_order_gd}.



The contributions of this paper are fourfold. First, we present an efficient algorithm, called Alternating and Iteratively-Reweighted Least Squares (AIRLS), for MLE with multiaffine-related variables. \textcolor{redrev1}{Second, we prove the convergence of AIRLS if each variable follows a GND. Moreover, we discuss the optimality of the likelihood of the estimate for finite-precision solvers, showing that the suboptimality decays with the solver's precision.} Third, we describe a method to obtain the confidence intervals of the MLE estimates. This last step is particularly useful because, while MLE provides estimates, it is often hard to compute their precision given by the Cramer-Rao lower bound \citep{statistical_inference_book}. This bound depends on the likelihood evaluated at the ground truth, and can be very sensitive to uncertainty when using estimates instead of (unknown) exact values. Fourth and finally, all the contributions are substantiated by extensive simulations on various problems from different scientific fields. \textcolor{redrev1}{Empirical evidence shows that AIRLS converges to a meaningful estimate even when the distributions are not restricted to GNDs.} The main algorithm is implemented in a custom developed open source package \citep{python_package}, which provides an easy-to-use interface for MLE problems with multiaffine related variables faster than with other aforementioned methods. Application-specific versions of the AIRLS algorithm were presented in \cite{paper_tsg} and \cite{cdc_paper_recursive}. Compared to the previous works, this paper considers a much wider class of likelihoods, presents a deeper theoretical analysis of the algorithm's convergence, and provide tools to evaluate the quality of the final estimates.

The paper is organized as follows. Section \ref{section_prob} states the inference problem and Section \ref{section_method} presents our algorithm. Section \ref{section_suboptimal} discusses the optimality of the result's likelihood. Section \ref{sec_variance} shows how to compute the estimate's variance. Section \ref{section_results} is devoted to numerical examples and Section \ref{section_conclu} concludes the paper.

\subsection{Preliminaries and notations}\label{subsec_notations}

The operator $\textnormal{diag}(\cdot)$ creates a diagonal matrix from the elements of a vector. The $\symk^{th}$ row of a matrix $X$ is denoted by $X_\symk$ and the $\symk^{th}$ column by $X_{:\symk}$. For $n$ scalars or row vectors $X_1, \dots, X_n$, $[X_\symh]_{\symh=1}^{n}$ corresponds to the matrix or vector constructed by vertically stacking $X_1, \dots, X_n$. For a vector $x = [x_\symk]_{\symk=1}^{n_B}$ composed of $n_B$ blocks, the vector $x_{-\symhb}$ is the vector of all blocks but the $\symhb^{th}$. The 2-norm of a vector or the spectral norm of a matrix is denoted by $\|\cdot\|$. The norm of a vector $x$ weighted by a positive definite matrix $W$ is denoted by $\|x\|_W = \sqrt{x^\top W x}$. The $\ell_1$ norm is $\|x\|_1$. The Moore-Penrose pseudoinverse of a matrix $X$ is $X^\dagger$. \textcolor{redrev1}{The function $\mt{sgn}(\cdot) \in \{-1,1\}$ gives the sign of a real number and we set $\mt{sgn}(0) = 1$ by definition.}

The expectation and covariance matrix of a random variable $x \in \bb{R}^n$ are denoted by $\bb{E}[x] \in \bb{R}^n$ and $\bb{V}[x] \in \bb{R}^{n \times n}$, respectively. The empirical variance of the elements of a vector $x \in \bb{R}^n$ is given by $\mt{\sffamily{Var}}[x] = \frac{1}{n^2}\lt(n x^\top x \,\textcolor{redrev1}{- (\sum_{i=1}^n x_i)^2}\rt) \in \bb{R} $.

\begin{definition}[Multiaffine function]\label{def_multi}
\textcolor{redrev1}{Let $x_1, \dots, x_{n_B}$ be a decomposition of $x$ into $n_B \geq 1$ blocks, where the blocks  $x_\symh \in \mathbb{R}^{n_\symh}$ are nonempty disjoint subsets of the scalar variables composing $x$ such that their union gives $x$. A scalar function $g(x)$ is called multiaffine if $g(x)$ depends on any single block $x_\symh$ in an affine way.}

A vector field $R(x) = [g_1(x), \dots, g_M(x)]^\top \in \R^M$ is multiaffine if and only if each element $g_\symh$ is multiaffine with respect to the same decomposition $x = [x_1^\top, \dots, x_{n_B}^\top]^\top$.
\end{definition}

\textcolor{redrev1}{As an example of block definitions, the vector $x^\top = [\underbrace{x_{1}, x_{2}}_{x_{b1}}, \underbrace{x_{3}}_{x_{b1}}, \underbrace{x_{4}, x_{5}, x_{6}, x_{7}}_{x_{b3}}]$ can be decomposed into 3 blocks $x_{b1}$, $x_{b2}$, and $x_{b3}$.}

\begin{example}\label{example_multi}
The function $g: \bb R^3 \rightarrow \bb R, g_{\mt{multi}}(x) = x_1x_2x_3 + x_1 - x_3 +1$ is multiaffine with respect to the decomposition $x = [x_1^\top, x_2^\top, x_3^\top]^\top$, while $g_{\mt{sq}}(x) = x_1x_2 - x_2^2$ is not multiaffine due to the squared variable.
\end{example}

Since, by definition, each element of $G$ depends on a single block of variables $x_\symh$ in an affine way, one has the following result.

\begin{corollary}\label{cor_simultaneous}
For a multiaffine vector field $R:\bb R^{\sum_{\symh=1}^{n_B} n_\symh}$ $\rightarrow R^M$, the following $n_B$ equalities simultaneously hold.
\begin{align}\label{aq_phis_equal}
R(x) &= \symC_{\symh}(x_{-\symh}) \!-\! \symF_{\symh}(x_{-\symh}) \cdot \!x_\symh, \quad \forall \symh \in \{1,\dots,n_B\},
\end{align}
where $\symF_\symh(x_{-\symh}) \in \bb R^{M \times n_\symh}$ and $\symC_\symh(x_{-\symh}) \in \bb R^M$ are suitable functions.
\end{corollary}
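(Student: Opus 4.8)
The plan is to read the representation directly off the definition of a multiaffine vector field. First I would fix an arbitrary block index $\symh \in \{1,\dots,n_B\}$ and freeze all the other blocks, treating $x_{-\symh}$ as a constant. By Definition~\ref{def_multi}, each scalar component $g_m(x)$, $m \in \{1,\dots,M\}$, depends on the block $x_\symh$ in an affine way, so with $x_{-\symh}$ held fixed the map $x_\symh \mapsto g_m(x)$ is an affine function from $\bb R^{n_\symh}$ to $\bb R$. Every such affine function admits a representation $g_m(x) = c_m(x_{-\symh}) - f_m(x_{-\symh}) \, x_\symh$, where the row vector $f_m(x_{-\symh}) \in \bb R^{1 \times n_\symh}$ collects (up to sign) the coefficients of the linear part, equivalently $f_m(x_{-\symh}) = -\nabla_{x_\symh} g_m$, which is constant in $x_\symh$, and the scalar $c_m(x_{-\symh}) = g_m|_{x_\symh = 0}$ is the constant term. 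The key point is that both $f_m$ and $c_m$ depend only on the frozen blocks $x_{-\symh}$.

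Second, I would stack these scalar identities over $m = 1, \dots, M$. Setting $\symF_\symh(x_{-\symh}) = [f_m(x_{-\symh})]_{m=1}^{M} \in \bb R^{M \times n_\symh}$ and $\symC_\symh(x_{-\symh}) = [c_m(x_{-\symh})]_{m=1}^{M} \in \bb R^{M}$ immediately yields $R(x) = \symC_\symh(x_{-\symh}) - \symF_\symh(x_{-\symh}) \, x_\symh$, which is exactly \eqref{aq_phis_equal} for the chosen $\symh$. Here I would rely on the notation introduced in Section~\ref{subsec_notations} for vertically stacking row vectors to form $\symF_\symh$.

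Third, I would observe that Definition~\ref{def_multi} requires every component $g_m$ to be multiaffine with respect to the \emph{same} block decomposition $x = [x_1^\top, \dots, x_{n_B}^\top]^\top$. Hence the preceding argument applies verbatim to every block index $\symh$, each producing a valid identity for the one and the same field $R$. I would emphasize that the word ``simultaneously'' only asserts that each of the $n_B$ displayed equalities is individually a correct representation of $R$, not that they are to be combined into a single equation; consequently no compatibility condition relating the different $\symF_\symh, \symC_\symh$ needs to be verified.

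I do not anticipate a genuine obstacle, since the statement is essentially a coordinate-wise unpacking of the definition of multiaffinity. The only mild care needed, and the crux of the argument, is the affine-factorization step: one must keep the dependence on $x_{-\symh}$ explicit so that $\symF_\symh$ and $\symC_\symh$ are well-defined functions of the frozen blocks and do not secretly depend on $x_\symh$. This follows precisely because the linear coefficients $-\nabla_{x_\symh} g_m$ are constant in $x_\symh$, which is guaranteed by affinity in that block.
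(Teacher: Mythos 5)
Your proposal is correct and matches the paper's treatment: the paper states Corollary~\ref{cor_simultaneous} without a formal proof, presenting it as immediate from Definition~\ref{def_multi} (``since, by definition, each element depends on a single block $x_\symh$ in an affine way''), and your argument is exactly that unpacking --- freezing $x_{-\symh}$, writing each affine component as constant term plus linear part, and stacking over $m$. The observation that the same decomposition underlies every component, so the representation holds for each $\symh$ simultaneously, is also precisely what the paper's follow-up example with $g_{\mt{multi}}$ illustrates.
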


\noindent
Corollary \ref{cor_simultaneous} provides a trivial linearization with respect to each block $x_\symh$, which is exploited in the sequel to simplify the computations. \textcolor{redrev1}{For example, the function $g_{\mt{multi}}$ from Example \ref{example_multi} can be written as}
\begin{align*}
    \textcolor{redrev1}{\begin{matrix*}[l]
    x_1x_2x_3 +x_1-x_3+1
    &=x_1\underbrace{(x_2x_3+1)}_{F_\symh(x_{-1})} &\;-\, \underbrace{x_3+1}_{C_\symh(x_{-1})}, \\
    &=x_2\underbrace{(x_1x_3)}_{F_\symh(x_{-2})}  &\;+\, \underbrace{x_1 - x_3+1}_{C_\symh(x_{-2})}, \\
    &=x_3\underbrace{(x_1x_2-1)}_{F_\symh(x_{-3})} &\;+\underbrace{x_1+1.\quad\quad}_{C_\symh(x_{-3})}
    \end{matrix*}}
\end{align*}

\begin{definition}
\label{def_gnd}
\textcolor{redrev1}{A standard GND has a density function given by
\begin{align}\label{eq_def_gnd}
    p_\symi(y) = \frac{q_\symi^{\frac{1 + q_\symi}{q_\symi}}}{2 \Gamma(q_\symi^{-1})}e^{- q_\symi |y|^{q_\symi}},
\end{align}
where $\Gamma$ is the Gamma function and $q_\symi \in \bb R_+$ is a positive parameter.}
\end{definition}
\textcolor{redrev1}{The class of standard GNDs includes the standard normal ($q_\symi = 2$) and Laplace ($q_\symi = 1$) distributions, as well as the uniform distribution on $[-1, 1]$ as a limit case for $q_\symi \rightarrow \infty$ \citep{gnd_def_paper}. Note that $q_\symi$ is not limited to integers but can be any positive real number (see Sections \ref{subsec_results_econ} and \ref{subsec_results_envi}).}


\section{Problem statement}\label{section_prob}
The paper focuses on MLE problems where the joint probability distribution has the following structure.

\begin{assumption}\label{ass_multiaffine}
The joint probability density $p(x)$ is proportional to \textcolor{redrev1}{the product of $p_1(r_1(x))$, $\dots$, $p_M(r_M(x))$ of $M$ GNDs with exponents $q_\symi \leq \bar q \in \bb N$.} The functions $r_\symi(x)$ for $\symi = 1, \dots, M$ are multiaffine functions with respect to the same decomposition $x_1, \dots, x_{n_B}$.
\end{assumption}
\begin{example}
The distribution $p(x) \propto e^{-(x_1x_2x_3)^2} \cdot e^{-|x_1 + x_2 x_3|}$ satisfies Assumption \ref{ass_multiaffine}. Indeed, $p(x)$ is proportional to the product of the densities $p_1(g_1(x)) \propto e^{-g_1(x)^2}$ and $p_2(g_2(x)) \propto e^{-|g_2(x)|}$\textcolor{redrev1}{, where} $g_1(x) = x_1x_2x_3$ and $g_2(x) = x_1 + x_2 x_3$ are both multiaffine with respect to the decomposition $x = [x_1^\top, x_2^\top, x_3^\top]^\top$.
\end{example}

\textcolor{redrev1}{Although the limitation to GNDs might seem restrictive, these distributions are quite common in engineering applications and other distributions can often be approximated with their product. For example, the truncated normal distribution is the product between a normal and a uniform distribution, both of which can be obtained from \eqref{eq_def_gnd} with $q_\symi = 2$ and $q_\symi \rightarrow \infty$, respectively. Additionally,} multiaffine models are a generalization of linear ones, and can model complex problems in many fields such as Bayesian and EIV system identification \citep{soderstrom_book, pillonetto_bayesian_outlier_id, ninness_bayesian_sysid_mcmc, chiuso_regularization_systems}, Generalized Kalman smoothing in signal processing \citep{ljung_general_kalman}, and generalized principal component analysis and tensor regressions in machine learning \citep{generalized_pca_vidal, denicolao_bayesian_func_learn, tensor_regressions, tensor_eiv}. In particular, the three following examples have quite simple solutions when the uncertainty is Gaussian-distributed, but can become quite computationally intensive \textcolor{redrev1}{when any GNDs are considered}. The detailed derivations are given in the Appendix.
\begin{example}[Generalized PCA]\label{example_generalized_pca}
Fitting a set of $n$ subspaces defined by their normal vectors $x_\symh$ amounts to solving $\prod_{\symh=\textcolor{redrev1}{1}}^n \phi_\symi^\top x_\symh = \epsilon_h \approx 0$ for all data points $\phi_\symi$ with $\symi = 1, \dots, M$, in the noise-free case. If the residuals $\epsilon_h$ follow a distribution $p$, the most likely fit is given in \citep{generalized_pca_vidal} by
\begin{align*}
    \argmax_{x_{\textcolor{redrev1}{1}}, \dots, x_n} \prod_{\symi=\textcolor{redrev1}{1}}^{M} p \lt( \prod_{\symh=\textcolor{redrev1}{1}}^n \phi_\symi^\top x_\symh\rt).
\end{align*}
\end{example}
\begin{example}[EIV System Identification]\label{example_eiv_sysid}
Let $Z_1 \in \bb R^{T \times n}$ and $Z_2 \in \bb R^{T \times n}$ be the stacked matrices of noisy measurement of the states of a system from $t=0$ to $t=T-1$ and $t=1$ to $t=T$, respectively, and let $X_1$ and $X_2$ be their exact value. From \citep{soderstrom_book}, if the measurements of the states are independent, the maximum-likelihood inference of the matrix $X_0 \in \bb R^{n \times n}$ relating $X_1$ and $X_2$ as $X_2 = X_1 X_0$ can be written as
\begin{align*}
    \argmax_{X_0,X_1} \!\!\prod_{\symi,\symt=\textcolor{redrev1}{1}}^{n,T}\!\! p_{\textcolor{redrev1}{\symt \symi}} \!\!\lt(\!\! Z_{2 \textcolor{redrev1}{\symt \symi}} \!-\! \sum_{\symh = \textcolor{redrev1}{1}}^n X_{1 \textcolor{redrev1}{\symt \symi}}X_{0 \symi \symh} \!\!\rt)\! p_{\textcolor{redrev1}{\symt\unaryminus 1, \symi}}(X_{1 \textcolor{redrev1}{\symt \symi}} \!-\! Z_{1 \textcolor{redrev1}{\symt \symi}}),
\end{align*}
\textcolor{redrev1}{where $p_{\symt \symi}$ is the probability distribution of the measurement error of the state $\symi$ at time $\symt$.}
\end{example}
\begin{example}[Low rank tensor regressions]\label{example_low_rank_tensor}
Linear regression models $Z = \Phi X$ are common in machine learning. However, $Z$ and $\Phi$ may be quite large or high-order tensors in problems such as imaging \citep{tensor_regressions}. It is therefore sometimes necessary to find a lower-rank representation, e.g., in 2 dimensions and with rank 1, $X = x_1  x_2^\top$ where $x_1 \in \bb R^{n_1}$ and $x_2 \in \bb R^{n_2}$. This gives the following maximum likelihood estimation problem
\begin{align*}
    \argmax_{x_1,x_2} \prod_{\symi,\symt=\textcolor{redrev1}{1}}^{n_1,T}p_{\textcolor{redrev1}{\symt \symi}} \lt(Z_{\textcolor{redrev1}{\symt \symi}} - \sum_{\symh = \textcolor{redrev1}{1}}^{n_2} x_{1 \symi}x_{2 \symh} \Phi_{\textcolor{redrev1}{\symt \symh}} \rt),
\end{align*}
where $p_{ht}$ is the probability distribution of the element $\textcolor{redrev1}{(\symt,\symi)}$ of the matrix of residuals $Z-\Phi X$.
\end{example}
More generally, many nonlinear functions can also be expressed as a multiaffine function using a change of variable $x = [\psi_1(v_1), \dots, \psi_{n_B}(v_{n_B})]$, where $v$ is the initial variable and $\psi_\symh$ are nonlinear function. This amounts to learning the coefficients of the basis functions of a model. More examples are provided in Section \ref{subsec_results_envi}.

Using Assumption \ref{ass_multiaffine}, maximizing the probability of the variables $x$ amounts to solve
\begin{align}\label{eq_intro_og_prob}
    \argmax_{x} \prod_{\symi=1}^M p_\symi \lt( \textcolor{redrev1}{r_\symi(x)} \rt),
\end{align}
\textcolor{redrev1}{where the probability densities $p_\symi$ and multiaffine functions $r_\symi$ satisfy Assumption \ref{ass_multiaffine}. Note that since all functions $r_\symi(x)$ are multiaffine with respect to the same decomposition, they can be written as the element of a multiaffine vector field $R(x)$ defined in \eqref{aq_phis_equal}. Moreover, Corollary \ref{cor_simultaneous} allows one to split the problem \eqref{eq_intro_og_prob} into multiple sub-problems in order to build an iterative method and to ensure that the likelihood maximization is tractable. In the sequel, we will focus on optimizing the negative log-likelihood}
\begin{align}\label{eq_lem_multi}
    G(x) = -\sum_{\symi = 1}^M \log p_\symi \lt( r_\symi(x) \rt)\!,\!
\end{align}
\textcolor{redrev1}{instead of \eqref{eq_intro_og_prob}. This operation yields the same optimizer and is commonly done in the MLE literature.}

\textcolor{redrev1}{The problem \eqref{eq_intro_og_prob}} provides the intuition that, similar to more classical regression model fitting problems based on Least Squares, MLE aims to minimize \textcolor{redrev1}{all the scalar} residuals
\begin{subequations}\label{eq_def_residuals}
\begin{align}
    r_{\symi}(x) &= \symf_{\symi 1}(x_{\unaryminus 1}) x_1 - \symc_{\symi 1}(x_{\unaryminus 1}), \\
    &\;\;\vdots \nonumber \\
    &= \symf_{\symi n_B}(x_{\unaryminus n_B}) x_{n_B} - \symc_{\symi n_B}(x_{\unaryminus n_B}),
\end{align}
\end{subequations}
\textcolor{redrev1}{where $\symf_{\symi\symhb}(x_{\unaryminus \symhb})$, and $\symc_{\symi\symhb}(x_{\unaryminus \symhb})$ are the $\symi^{th}$ elements of $\symF_\symhb(x_{\unaryminus \symhb})$ and $\symC_\symhb(x_{\unaryminus \symhb})$, respectively}. \textcolor{redrev1}{Likewise, we define the modified residuals $\hat\rho_\symi(x)$ as}
\begin{align}\label{eq_def_mod_residuals}
    \textcolor{redrev1}{
    \hat\rho_\symi(x) = \mt{sgn}(r_{\symi}(x)) (r_\symi(x)^2 + \alpha)^\frac{1}{\bar q}, 
    }
\end{align}
\textcolor{redrev1}{where $\mt{sgn}(x)$ is the sign function and $\alpha > 0$ is a small real constant. This modification is central to the numerical stability of the AIRLS algorithm presented in the sequel.}

\section{\textcolor{redrev1}{AIRLS Algorithm}}\label{section_method}
Naively, one could solve \eqref{eq_intro_og_prob} by nesting two existing techniques: (i) IRLS, which is very popular in problems with non-Gaussian distribution (e.g., logistic or Laplace) and (ii) Block Coordinate Descent (BCD), which is often used to solve problems with multiaffine costs \citep{irls_convergence, block_relaxation}. Such a nested approach would require IRLS to converge at each iteration of BCD, which would be quite slow for high dimensional problems. To alleviate this high computational complexity, we propose to execute only one iteration of (i) and (ii), alternatingly. The pseudo-code for such an approach is given in Algorithm \ref{alg_airls_def}.

\begin{algorithm}
\caption{AIRLS}\label{alg_airls_def}
\begin{algorithmic}
\Require $\alpha > 0$, \textcolor{redrev1}{$x_{\mt{init}} \in \bb R^n$}
\State $x \gets x_{\mt{init}} ,\; \textcolor{redrev1}{\mc L_-} = \infty ,\; \textcolor{redrev1}{\mc L_+ = -\sum_{\symi=1}^{M} \log p_\symi (\hat\rho_{\symi}(x_{\mt{init}}))}$
\While{$\textcolor{redrev1}{\mc L_-} - \textcolor{redrev1}{\mc L_+} > $ tol}
\For{$\symhb = 1, \dots, n_B$}
    \vspace{-16pt}
    \State \begin{align}\label{eq_w_def}
    &\! W(x) \!=\! \textcolor{redrev1}{\mt{diag}\!\!\lt(\!\!\lt[\!
    \frac{\log\frac{p_1(0)}{p_1(\hat\rho_1(x))}}{|\hat\rho_1(x)|^{\bar q}}, \dots, \frac{\log\frac{p_M(0)}{p_M(\hat\rho_M(x))}}{|\hat\rho_M(x)|^{\bar q}}
    \!\rt]\!\!\rt)}\hspace{-20pt} &&
    \\ \label{eq_abar_def_short}
    & \textcolor{redrev1}{x_\symhb \!\!\leftarrow\! \!\lt(\!\, \symF_{\symhb}(x_{\unaryminus \symhb})^{\!\!\top}\! W(x) \symF_{\symhb}(x_{\unaryminus \symhb}) \!\rt)^{\!\dagger} \!\! \symF_{\symhb}(x_{\unaryminus \symhb})^{\!\!\top}\! W(x) \symC_{\symhb}(x_{\unaryminus \symhb})} \hspace{-14pt} &&
    %
    \end{align}
\vspace{-16pt}
\EndFor
\State $\textcolor{redrev1}{\mc L_-} = \textcolor{redrev1}{\mc L_+} ,\; \textcolor{redrev1}{\mc L_+ = - \sum_{\symi=1}^{M} \log p_\symi (\hat\rho_{\symi}(x))}$
\EndWhile
\end{algorithmic}
\end{algorithm}

In words, Algorithm \ref{alg_airls_def} consists of approximating problem \eqref{eq_intro_og_prob} with the quadratic problem 
\begin{align}\label{eq_abar_def_gen}
    \textcolor{redrev1}{\argmin_{x_i^\prime} \|\symC_{\symhb}(x_{\unaryminus \symhb}) \!-\! \symF_{\symhb}(x_{\unaryminus \symhb})x_i^\prime \|^2_{W(x)}},
\end{align}
w.r.t. one of the variable blocks $x_\symh$, \textcolor{redrev1}{and to use the closed-form solution \eqref{eq_abar_def_short}} to iteratively update $x_\symh$. Next, we address the convergence of Algorithm \ref{alg_airls_def}. Although it belongs to the class of reformulation-linearization algorithms, it is not obvious that the specific implementation for \eqref{eq_intro_og_prob} converges \citep{reformulation_linearization, miguel_nested}.

A main result of this paper is to show that Algorithm \ref{alg_airls_def} converges, as stated in the following theorem. Specifically, the proof of the theorem shows that \textcolor{redrev1}{the following map $\hat G(x)$ decreases} at each iteration. 
\begin{align}\label{eq_lemma_crit}
    \textcolor{redrev1}{\!\!\hat G(x) \!= 
    G(0) +\! \sum_{\symi=1}^M \! \lt( \! \lt( r_\symi(x) \rt)^2 \!+\! \alpha \rt)^{\!\!\frac{q_\symi}{\bar q}}\!\!\!,\!\!}
\end{align}
\textcolor{redrev1}{where $q_\symi \leq \bar q$ is the exponent of $p_\symi$. The weights $W(x)$ are designed such that the quadratic cost in \eqref{eq_abar_def_gen} is equal to $\hat G(x)$ when $x_\symhb^\prime = x_\symhb$ and $\alpha=0$.} The convergence only relies on a small numerical stability parameter $\alpha > 0$, which avoids singularities if some residuals actually decay to zero.


\begin{theorem}[Convergence]\label{thm_abar}
Under Assumption \ref{ass_multiaffine}, Algorithm \ref{alg_airls_def} converges to a fixed point.
\end{theorem}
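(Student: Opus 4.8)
The plan is to treat $\hat G$ from \eqref{eq_lemma_crit} as a Lyapunov function and to recognize each block update of Algorithm \ref{alg_airls_def} as one step of Majorization--Minimization (MM). The first task is to verify that, for the current iterate $x$, the weighted least-squares objective \eqref{eq_abar_def_gen} agrees --- up to a positive constant factor and additive terms independent of the updated block --- with the quadratic tangent majorizer of $\hat G$ at $x$, restricted to the block $x_\symhb$. Writing $s_\symi := q_\symi / \bar q$, each summand of $\hat G$ has the form $(r_\symi(x)^2 + \alpha)^{s_\symi}$, and Assumption \ref{ass_multiaffine} guarantees $q_\symi \le \bar q$, hence $s_\symi \le 1$, so $t \mapsto (t+\alpha)^{s_\symi}$ is concave on $t \ge 0$.

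Concavity yields, for every $\rho \in \bb R$, the tangent upper bound
\[
(\rho^2 + \alpha)^{s_\symi} \le (r_\symi(x)^2 + \alpha)^{s_\symi} + s_\symi (r_\symi(x)^2 + \alpha)^{s_\symi - 1}\bigl(\rho^2 - r_\symi(x)^2\bigr),
\]
with equality at $\rho = r_\symi(x)$. By Corollary \ref{cor_simultaneous}, $r_\symi$ is affine in $x_\symhb$, so substituting $\rho = r_\symi(x')$, where $x'$ differs from $x$ only in block $\symhb$, makes the right-hand side a convex quadratic in $x_\symhb'$ whose quadratic coefficient is $s_\symi (r_\symi(x)^2+\alpha)^{s_\symi-1} = (1/\bar q)\, W_{\symi\symi}(x)$, with $W_{\symi\symi}(x) = q_\symi (r_\symi(x)^2+\alpha)^{q_\symi/\bar q - 1}$ the diagonal weight of \eqref{eq_w_def}. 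Summing over $\symi$ identifies \eqref{eq_abar_def_gen} as $\bar q$ times this majorizer, so its minimizer \eqref{eq_abar_def_short} --- computed from the normal equations, with the pseudoinverse selecting the minimum-norm solution when $\symF_\symhb^\top W \symF_\symhb$ is rank-deficient --- does not increase $\hat G$. Here the stabilizer $\alpha > 0$ is essential: it keeps each weight finite and strictly positive even where a residual vanishes, so the majorizer is well defined at every iterate.

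Chaining this MM inequality across the $n_B$ block updates of an outer iteration gives $\hat G(x^{(k+1)}) \le \hat G(x^{(k)})$. Since each summand of $\hat G$ is bounded below by $\alpha^{s_\symi} \ge 0$, we have $\hat G \ge G(0)$, so the non-increasing sequence $\{\hat G(x^{(k)})\}$ converges and the per-update decrements tend to zero; the fixed points of \eqref{eq_abar_def_short} are exactly the coordinate-wise stationary points of $\hat G$.

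The main obstacle is upgrading convergence of the values $\hat G(x^{(k)})$ to convergence of the iterates to a fixed point. My plan is to lower-bound each decrement by a definite quadratic in the increment, $\hat G(x^{(k)}) - \hat G(x^{(k+1)}) \ge c \, \| x_\symhb^{(k+1)} - x_\symhb^{(k)} \|^2$ on the range of $\symF_\symhb$, using the strong convexity of the block majorizer there together with the minimum-norm property of the pseudoinverse; telescoping then forces $\| x^{(k+1)} - x^{(k)} \| \to 0$. The remaining delicate points are (i) confining the iterates to a compact set, via a coercivity/sublevel-set argument on $\hat G$ that must account for persistently rank-deficient $\symF_\symhb$, and (ii) excluding drift along a continuum of equal-cost fixed points. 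I expect to handle both by a Block Successive Upper-bound Minimization argument in the spirit of \citep{block_relaxation}: every limit point is shown to be a coordinate-wise minimizer of $\hat G$, i.e. a fixed point of the update, and the vanishing-increment property pins the sequence to a single such point.
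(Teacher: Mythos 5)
Your first half is sound, and it is essentially the paper's own descent argument in different packaging. The tangent upper bound you get from concavity of $t \mapsto (t+\alpha)^{q_\symi/\bar q}$, with quadratic coefficient $\tfrac{q_\symi}{\bar q}(r_\symi(x)^2+\alpha)^{q_\symi/\bar q -1} = \tfrac{1}{\bar q}W_{\symi}(x)$, is exactly the inequality the paper assembles from the orthogonal-projection property of the weighted least-squares update together with Young's inequality; both routes conclude that each update \eqref{eq_abar_def_short} does not increase $\hat G$ of \eqref{eq_lemma_crit}, with equality only when the block is already at its weighted least-squares value. The MM phrasing is arguably cleaner, but it proves the same lemma.

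The genuine gaps are in your second half, which is a plan rather than a proof, and two of its steps would fail as stated. First, compactness: $\hat G$ is \emph{not} coercive in $x$ --- it is constant along directions in $\mt{null}(\symF_\symhb(x_{\unaryminus\symhb}))$, so its sublevel sets are unbounded whenever any $\symF_\symhb$ is rank deficient, and no ``coercivity/sublevel-set argument'' on $\hat G$ alone can confine the iterates. The paper's proof closes this differently: the pseudoinverse update leaves $x_\symhb$ with zero component in $\mt{null}(\symF_\symhb)$, so $\|x_\symhb\| = \|x_\symhb^\parallel\|$, and only the parallel component is controlled by the decrease of $\hat G$ (radial unboundedness with respect to $x^\parallel$); this yields an explicit forward-invariant ball. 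You name both ingredients (minimum-norm property, rank deficiency) but never assemble them into this argument. Second, your decrement bound $\hat G(x^{(k)}) - \hat G(x^{(k+1)}) \geq c\,\|x_\symhb^{(k+1)} - x_\symhb^{(k)}\|^2$ is not available with a uniform $c>0$: the exact decrement is controlled by $\|\symF_\symhb(x_\symhb^{(k)}-x_\symhb^{(k+1)})\|^2_{W}$, the smallest nonzero eigenvalue of $\symF_\symhb^\top W \symF_\symhb$ varies with the iterate and can degenerate (rank can drop in the limit), and the increment can have a component in the \emph{current} null space of $\symF_\symhb$ --- the old $x_\symhb^{(k)}$ was minimum-norm with respect to an older $\symF_\symhb$ whose null space has since rotated, because $x_{\unaryminus\symhb}$ changed --- and that component produces no decrease at all. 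So ``telescoping forces $\|x^{(k+1)}-x^{(k)}\|\to 0$'' does not follow.

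Finally, even granting vanishing increments and that every limit point is a coordinate-wise minimizer (which is all that BSUM-type results in the spirit of \citet{block_relaxation} deliver), this does not ``pin the sequence to a single point'': vanishing increments are perfectly compatible with drift along a continuum of equal-cost fixed points, absent an isolation assumption or a \L{}ojasiewicz-type inequality, neither of which you establish. The paper sidesteps this entire difficulty by invoking LaSalle's invariance principle on the compact invariant set: the iterates converge to their accumulation set, and the equality case of the descent inequality shows every point of that set is a fixed point of \eqref{eq_abar_def_short} --- which is the precise sense in which the theorem's conclusion is proved. To repair your proposal, replace steps (i)--(ii) of your plan with the null-space/minimum-norm boundedness argument and the LaSalle conclusion, rather than the quadratic-decrement and BSUM route.
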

\begin{proof}
For space reasons, in this proof, we will only write $W_\symhb$, $\symC_{\symhb}$, and $\symF_{\symhb}$ instead of $W(x_\symhb, x_{-\symhb})$, $\symC_{\symhb}(x_{-\symhb})$, and $\symF_{\symhb}(x_{-\symhb})$, respectively. Note that $\alpha > 0$ is required for $W_\symhb$ to be finite, and Assumption \ref{ass_multiaffine} allows one to define $\symC_{\symhb}$ and $\symF_{\symhb}$. The update \eqref{eq_abar_def_short} is equivalent to
\begin{align}\label{eq_abar_def_res}
&\textcolor{redrev1}{
W_\symhb^\frac{1}{2} (\symC_{\symhb} - \symF_{\symhb} x_\symhb^+) = W_\symhb^\frac{1}{2} R(x^+)
}
\\ \nonumber &\quad\quad\;
\textcolor{redrev1}{
= W_\symhb^\frac{1}{2} R(x) 
- W_\symhb^\frac{1}{2} \symF_{\symhb}\lt( \symF_{\symhb}^\top W_\symhb \symF_{\symhb} \rt)^{\dagger} \symF_{\symhb}^\top W_\symhb^\frac{1}{2} W_\symhb^\frac{1}{2} R(x),}
\end{align}
\textcolor{redrev1}{where $R(x)$ is defined in \eqref{aq_phis_equal} and $x$ and $x^+ = [x_{-\symhb}, x_\symhb^+]$ denote the variables before and after the update, respectively.} 
The matrix $P_\symhb = I_{n_\symhb} - \textcolor{redrev1}{W_\symhb^\frac{1}{2} \symF_{\symhb}\lt( \symF_{\symhb}^\top W_\symhb \symF_{\symhb} \rt)^{\dagger} \symF_{\symhb}^\top W_\symhb^\frac{1}{2}}$ is \textcolor{redrev1}{an orthogonal} projection matrix because $P_\symhb^2 = P_\symhb \textcolor{redrev1}{\;= P_\symhb^\top}$. Hence, \textcolor{redrev1}{the norm $\|W_\symhb^\frac{1}{2} R(x)\|_2^2 = \|R(x)\|_{W_\symhb}^2$ does not increase with the update \eqref{eq_abar_def_short}. Note that $W_\symhb \succ 0$ because $\alpha > 0$ and $\mt{sgn}(0) = 1$. Writing this decrease in norm term by term yields}
\begin{align*}
\textcolor{redrev1}{
    \sum_{\symi = 1}^M
    \underbrace{\frac{\log\frac{p_\symi(0)}{p_\symi(\hat\rho_\symi(x))}}{|\hat\rho_\symi(x)|^{\bar q}}}_{w_\symi(x)}
    (r_\symi(x^+))^2 
    \leq \sum_{\symi = 1}^M 
    \frac{\log\frac{p_\symi(0)}{p_\symi(\hat\rho_\symi(x))}}{|\hat\rho_\symi(x)|^{\bar q}} (r_\symi(x))^2.
}
\end{align*}
\textcolor{redrev1}{We can add $w_\symi(x)\alpha $ to both sides of the inequality to transform $(r_\symi(x))^2$ into $|\hat\rho_\symi(x)|^{\bar q}$ and obtain}
\begin{align*}
\textcolor{redrev1}{
    \sum_{\symi = 1}^M\!
    \frac{|\hat\rho_\symi(x^+)|^{\bar q}}{|\hat\rho_\symi(x)|^{\bar q}}  \log\!\frac{p_\symi(0)}{p_\symi(\hat\rho_\symi(x))}
    \!\leq\! \sum_{\symi = 1}^M \!
    \frac{|\hat\rho_\symi(x)|^{\bar q}}{|\hat\rho_\symi(x)|^{\bar q}}  \log\!\frac{p_\symi(0)}{p_\symi(\hat\rho_\symi(x))}.
}
\end{align*}
\textcolor{redrev1}{Since $p_\symi$ is a GND of exponent $q_\symi$, $\log\!\frac{p_\symi(0)}{p_\symi(\hat\rho_\symi(x))} = |\hat\rho(x)|^{q_\symi}$. Hence, the inequality becomes}
\begin{align}\label{eq_proof_conv_norm_dec}
\textcolor{redrev1}{
    \sum_{\symi = 1}^M\!
    |\hat\rho(x^+)|^{\bar q} |\hat\rho(x)|^{q_\symi \!- \bar q}
    \!\leq\! 
    \sum_{\symi = 1}^M \!
    |\hat\rho(x)|^{q_\symi}.
}
\end{align}

\textcolor{redrev1}{We can now relate the left-hand side term to $\hat G(x^+)$ using Young's inequality $ab \leq \frac{q-1}{q} a^\frac{q}{q-1} + \frac{1}{q}b^q$, where $a = 1$, $b = \frac{|\hat\rho(x^+)|^{q_\symi}}{|\hat\rho(x)|^{q_\symi}}$, and $q = \frac{\bar q}{q_\symi} \geq 1$. When both sides are multiplied by $|\hat\rho(x)|^{q_\symi}$, this yields}
\begin{align}\label{eq_proof_conv_young}
\textcolor{redrev1}{
    |\hat\rho(x^+)|^{q_\symi}} &
    \textcolor{redrev1}{\,= 1 \frac{|\hat\rho(x^+)|^{q_\symi}}{|\hat\rho(x)|^{q_\symi}} \cdot |\hat\rho(x)|^{q_\symi}}
    \nonumber\\
    &\!\textcolor{redrev1}{\,
    \leq
    \frac{\bar q \!-\! q_\symi}{\bar q}|\hat\rho(x)|^{q_\symi} \!+\! \frac{q_\symi}{\bar q}
    |\hat\rho(x^+)|^{\bar q} |\hat\rho(x)|^{q_\symi \!- \bar q},
    }
\end{align}
\textcolor{redrev1}{which holds with equality if and only if $|\hat\rho(x)| = |\hat\rho(x^+)|$. One can recognize the terms of the left-hand side of \eqref{eq_proof_conv_norm_dec} on the right-hand side of \eqref{eq_proof_conv_young}. Hence, by summing \eqref{eq_proof_conv_young} over all $\symi = 1, \dots, M$, one obtains that $\sum_{\symi = 1}^M \! |\hat\rho(x^+)|^{q_\symi} \leq \sum_{\symi = 1}^M \! |\hat\rho(x)|^{q_\symi}$, or equivalently, that $\hat G(x)$ decreases at every iteration.}

We continue the proof by showing the existence of a compact positive invariant set of the algorithm. First, we split $x$  as $x^\parallel+x^\perp = [x_\symhb^\parallel]_{\symhb=1}^{n_B} + [x_\symhb^\perp]_{\symhb=1}^{n_B}$, where, for all $\symhb=1,\dots,n_B$, $x_\symhb^\parallel \in \mt{range}(F_\symhb)$ and $x_\symhb^\perp \in \mt{null}(F_\symhb)$. Second, we make the following observations.
\begin{enumerate}[label=(\roman*)]
    \item While $x_{\unaryminus\symhb}$ is not modified by a single update \eqref{eq_abar_def_gen}, any element of both $x^\parallel$ and $x^\perp$ may vary.
    \label{enum_proof_item_0}
    \item In general, $\|x^\parallel\|_2$ is bounded by a constant $x_{\max} < +\infty$ because \textcolor{redrev1}{$\hat G(x)$} is radially unbounded with respect to $x^\parallel$ and must be lesser than \textcolor{redrev1}{$\hat G(x_{\mt{init}})$}.
    \label{enum_proof_item_1}
    \item \textcolor{redrev1}{After an update \eqref{eq_abar_def_short} for any $\symhb$, $x_\symhb^\parallel$ belongs to \eqref{eq_abar_def_gen} and $x_\symhb^\parallel = 0$ because $\mt{null}\!\lt((F_\symhb^\top W_\symhb F_\symhb)^\dagger\rt) = \mt{null}( F_\symhb)$.}
    \label{enum_proof_item_2}
\end{enumerate}
Third, \textcolor{redrev1}{\ref{enum_proof_item_2} implies $\|x_\symhb\|_2 = \|x_\symhb^\parallel\|_2$.} 
Hence, \ref{enum_proof_item_1} implies that $\|x_\symhb\|_2 \leq \|x^\parallel\|_2 \leq x_{\max}$ always holds after the update \eqref{eq_abar_def_short} of $x_\symhb$ for all $\symhb = 1, \dots, n_B$. Fourth and finally, despite \ref{enum_proof_item_0}, the bound on $\|x_\symhb\|_2$ implies that $\|x\|_2 \leq \|x_{\mt{init}}\|_2 + n_B x_{\max}$, which means that the ball of radius $\|x_{\mt{init}}\|_2 + n_B x_{\max}$ is a forward invariant for Algorithm \ref{alg_airls_def}.

To conclude the proof, we consider Algorithm \ref{alg_airls_def} as an autonomous discrete-time system with state $x$, a compact positive invariant set, and a positive-semidefinite function \textcolor{redrev1}{$\hat G(x)$} that decreases over any state trajectory. LaSalle's invariance principle therefore proves the existence a set of accumulation points to which $x$ converges. 
\textcolor{redrev1}{Furthermore, $\hat G(x) = \hat G(x^+)$ can only hold if \eqref{eq_proof_conv_norm_dec} holds with equality, i.e., if $W_\symhb^{-\frac{1}{2}} P_\symhb W_\symhb^\frac{1}{2} R(x) = R(x)$, which is only satisfied if $x_\symhb = \lt( \symF_{\symhb}^\top W_\symhb \symF_{\symhb} \rt)^{\dagger} \symF_{\symhb}^\top W_\symhb \symC_{\symhb}$. For $x$ to be in the set of accumulation points, the equality in must hold for all $\symhb$, meaning that $x$ is a fixed point, which concludes the proof.}
\end{proof}
Algorithm \ref{alg_airls_def} requires a numerical stability parameter $\alpha$ to avoid a division by zero if some residuals are zero. This parameter depends on the machine precision. To tune it, one can start with the baseline floating point precision (e.g., $2.22 \cdot 10^{-16}$ with 64 bits under the IEEE-754 standard), and increase $\alpha$ until the solution stops changing significantly. In what follows, we denote the fixed point to which AIRLS converges by $\hat x$. The next section will discuss the magnitude of the approximation introduced by $\alpha$.

\section{Suboptimality bound \textcolor{redrev1}{for heavy-tailed distributions}}\label{section_suboptimal}
The convergence of Algorithm \ref{alg_airls_def} does not necessarily guarantee that the fixed point is an optimizer of \eqref{eq_intro_og_prob}, due to the approximation introduced by the numerical stability parameter $\alpha$. As $\alpha$ increases, the accuracy of the approximation \eqref{eq_abar_def_gen} of the minimizer of \eqref{eq_lem_multi} decreases. In what follows, we quantify the suboptimality introduced by $\alpha$ \textcolor{redrev1}{when $\bar q = 2$}.

\begin{assumption}\label{ass_optimum}
The problem \eqref{eq_intro_og_prob} has a unique critical point, which is its maximum.
\end{assumption}
Assumption \ref{ass_optimum}, which is used only in this section, is not very restrictive given that the distributions $p_\symi$ are already unimodal. It is verified when the multiaffine functions inside all $p_\symi$ are sufficiently different, i.e., if one $r_{\symh}(x)$ takes the same value at different points, some other $r_{\symk}(x)$ with $\symk \neq \symh$ must take different values on these points. Under Assumption \ref{ass_optimum}, one can characterize the suboptimality of a fixed point $\hat x$ of Algorithm \ref{alg_airls_def} using the map $\hat G(x)$\textcolor{redrev1}{, defined in \eqref{eq_lemma_crit}}. The following Lemma shows that Algorithm \ref{alg_airls_def} actually minimizes $\hat G(x)$ rather than $G(x)$ defined in \eqref{eq_lem_multi}.

\begin{lemma}\label{lem_crit}
\textcolor{redrev1}{Under Assumption \ref{ass_multiaffine}}, any fixed point $\hat x$ of Algorithm \ref{alg_airls_def} is a critical point of the function $\hat G(x)$.
\end{lemma}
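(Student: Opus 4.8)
The plan is to show that the per-block update \eqref{eq_abar_def_short} defining a fixed point is exactly the first-order stationarity condition of $\hat G$ restricted to that block, and then to assemble the block conditions into the full gradient $\nabla \hat G(\hat x)$. First I would compute the partial gradient of $\hat G$ with respect to a single block $x_\symhb$. By Corollary \ref{cor_simultaneous} the coefficient functions $\symf_{\symi\symhb}(x_{-\symhb})$ and $\symc_{\symi\symhb}(x_{-\symhb})$ do not depend on $x_\symhb$, so each residual is affine in $x_\symhb$ with $\nabla_{x_\symhb} r_\symi = -\symf_{\symi\symhb}^\top$ (using $R(x) = \symC_\symhb - \symF_\symhb x_\symhb$ from \eqref{aq_phis_equal}). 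Applying the chain rule to $\hat G(x) = G(0) + \sum_\symi (r_\symi(x)^2 + \alpha)^{q_\symi/\bar q}$ gives
\begin{align*}
\nabla_{x_\symhb}\hat G(x) = -\frac{2}{\bar q}\sum_{\symi=1}^M q_\symi\,\bigl(r_\symi(x)^2 + \alpha\bigr)^{\frac{q_\symi}{\bar q}-1}\, r_\symi(x)\,\symf_{\symi\symhb}^\top.
\end{align*}
The crucial observation is that the scalar coefficient $q_\symi (r_\symi^2+\alpha)^{q_\symi/\bar q -1}$ equals the $\symi$-th diagonal entry $w_\symi(x)$ of $W(x)$ in \eqref{eq_w_def}: from \eqref{eq_def_mod_residuals} one has $|\hat\rho_\symi(x)|^{\bar q} = r_\symi(x)^2 + \alpha$, while \eqref{eq_def_gnd} yields $\log\tfrac{p_\symi(0)}{p_\symi(\hat\rho_\symi(x))} = q_\symi|\hat\rho_\symi(x)|^{q_\symi}$, so that $w_\symi(x) = q_\symi|\hat\rho_\symi(x)|^{q_\symi-\bar q} = q_\symi(r_\symi^2+\alpha)^{q_\symi/\bar q - 1}$. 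Hence the block gradient collapses to $\nabla_{x_\symhb}\hat G(x) = -\frac{2}{\bar q}\symF_\symhb^\top W(x) R(x)$, and $\hat x$ is a critical point of $\hat G$ precisely when $\symF_\symhb^\top W(\hat x) R(\hat x) = 0$ for every $\symhb$.

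Second I would verify this weighted normal equation at any fixed point, adopting the shorthand $W_\symhb, \symF_\symhb, \symC_\symhb$ of the proof of Theorem \ref{thm_abar} evaluated at $\hat x$. Setting $A = W_\symhb^\frac{1}{2}\symF_\symhb$ and $b = W_\symhb^\frac{1}{2}\symC_\symhb$, the fixed-point identity \eqref{eq_abar_def_short} reads $\hat x_\symhb = (A^\top A)^\dagger A^\top b$, so that
\begin{align*}
\symF_\symhb^\top W_\symhb \symF_\symhb\,\hat x_\symhb = (A^\top A)(A^\top A)^\dagger A^\top b = A^\top b = \symF_\symhb^\top W_\symhb \symC_\symhb.
\end{align*}
Here the middle equality holds because $(A^\top A)(A^\top A)^\dagger$ is the orthogonal projector onto $\mt{range}(A^\top A) = \mt{range}(A^\top)$ and $A^\top b \in \mt{range}(A^\top)$. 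Rearranging gives $\symF_\symhb^\top W_\symhb (\symC_\symhb - \symF_\symhb \hat x_\symhb) = \symF_\symhb^\top W_\symhb R(\hat x) = 0$. As this holds for all $\symhb = 1,\dots,n_B$, the block gradients of the first step all vanish, so $\nabla\hat G(\hat x) = 0$ and $\hat x$ is a critical point of $\hat G$.

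The main obstacle, and the step deserving the most care, is the weight-matching in the first step: the correspondence $w_\symi(x) = \bar q\cdot\frac{q_\symi}{\bar q}(r_\symi^2+\alpha)^{q_\symi/\bar q-1}$ relies on the explicit GND density \eqref{eq_def_gnd} to fix both the exponent $q_\symi/\bar q - 1$ and the multiplicative factor $q_\symi$; an incorrect constant would destroy the identity $\nabla_{x_\symhb}\hat G = -\frac{2}{\bar q}\symF_\symhb^\top W R$ and a fixed point would no longer be stationary for $\hat G$. The pseudoinverse manipulation of the second step is then routine, provided one recalls that $MM^\dagger$ projects onto $\mt{range}(M)$ and that $A^\top A$ and $A^\top$ share the same range, which is exactly what guarantees that the pseudoinverse (needed because $\symF_\symhb^\top W_\symhb \symF_\symhb$ may be singular) introduces no loss.
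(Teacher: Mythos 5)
Your proof is correct and follows essentially the same route as the paper's: compute the block gradient of $\hat G$ by the chain rule, identify its scalar coefficients with the diagonal weights of $W(x)$ via the GND density (the paper's \eqref{eq_w_def_GND}), and conclude that the weighted normal equations—hence all block gradients—vanish at any fixed point. Your second step, verifying $\symF_\symhb^\top W_\symhb R(\hat x)=0$ through the projector $(A^\top A)(A^\top A)^\dagger$ onto $\mathrm{range}(A^\top)$, makes rigorous a detail the paper only asserts when combining \eqref{eq_lemma_crit_diff_vec} with \eqref{eq_abar_def_gen}, so it is a welcome refinement of the same argument rather than a different approach.
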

\begin{proof}
The proof starts by computing the gradient of each term of \eqref{eq_lemma_crit}. We have
\begin{align}\label{eq_lemma_crit_diff}
    \!\nabla_{x_\symhb} \hat G(x) =  \sum_{\symi=1}^M \symf_{\symi\symhb}(x_{\unaryminus\symhb}) \cdot 2 r_{\symi}(x) \cdot \frac{q_\symi}{\textcolor{redrev1}{\bar q}}\lt( r_{\symi}(x)^2 +\alpha \rt)^{\frac{q_\symi}{\textcolor{redrev1}{\bar q}}-1}\!\!.
\end{align}
If all $p_\symi$ are standard GNDs, the elements of the weight matrix $W(x)$ defined in \eqref{eq_w_def} can be written as
\begin{align}\label{eq_w_def_GND}
\!W_{\symi}(x) \!=\! q_{\symi}((\symf_{\symhb\symi}(x_{\unaryminus \symhb}) \cdot \!x_\symhb \!-\! \symc_{\symhb\symi}(x_{\unaryminus \symhb}))^2 \!+\! \alpha)^{\frac{q_{\symi}}{\textcolor{redrev1}{\bar q}}-1} .
\end{align} 
By simplifying \eqref{eq_lemma_crit_diff} and writing it in matrix form using \eqref{eq_w_def_GND}, we have
\begin{align}\label{eq_lemma_crit_diff_vec}
\nabla_{x_\symhb} \hat G(x) \!=\!  \textcolor{redrev1}{\frac{2}{\bar q}} \symF_{\symhb}(x_{\unaryminus \symhb})^{\!\top} \!W(x) (\symF_{\symhb}(x_{\unaryminus \symhb}) x_\symhb  \!-\! \symC_{\symhb}(x_{\unaryminus \symhb})).
\end{align}
Combining \eqref{eq_lemma_crit_diff_vec} with \eqref{eq_abar_def_gen}, we observe that at a fixed point of Algorithm \ref{alg_airls_def}, $\lt. \nabla_{x_\symhb} \hat G(x) \rt|_{x = \hat x} = \bb 0_{n_\symhb}$ for all $\symhb$, which concludes the proof.
\end{proof}

Since Algorithm \ref{alg_airls_def} optimizes $\hat G$, we are interested to derive bounds on the difference between $G$ and $\hat G$. \textcolor{redrev1}{This can be done for $\bar q = 2$, i.e., when the distributions are heavy-tailed.}

\begin{lemma}\label{lem_ineq}
\textcolor{redrev1}{Under Assumption \ref{ass_multiaffine} and if $\bar q = 2$}, $G(x)$ satisfies
\begin{align}\label{eq_lem_ineq}
    \hat G(x) \geq G(x) \geq \hat G(x) - \sum_{\symi = 1}^{M} \alpha^{\frac{q_{\symi}}{2}},\; \forall x \in \bb R^n.
\end{align}
\end{lemma}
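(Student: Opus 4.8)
The plan is to reduce the two-sided bound to an elementary, term-by-term comparison of a scalar power function, using the fact that $\bar q = 2$ forces $G$ and $\hat G$ to share the same penalty structure. First I would make $G(x)$ explicit. Each $p_\symi$ is a standard GND of exponent $q_\symi$, so the identity $\log\frac{p_\symi(0)}{p_\symi(y)} = |y|^{q_\symi}$ already exploited in the proof of Theorem \ref{thm_abar} gives $-\log p_\symi(r_\symi(x)) = -\log p_\symi(0) + |r_\symi(x)|^{q_\symi}$. Summing over $\symi$ yields $G(x) = G(0) + \sum_{\symi=1}^M |r_\symi(x)|^{q_\symi}$, where $G(0) = -\sum_{\symi=1}^M \log p_\symi(0)$ is exactly the constant appearing in \eqref{eq_lemma_crit}. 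Specializing \eqref{eq_lemma_crit} to $\bar q = 2$ gives $\hat G(x) = G(0) + \sum_{\symi=1}^M (r_\symi(x)^2 + \alpha)^{q_\symi/2}$. Since $|r_\symi(x)|^{q_\symi} = (r_\symi(x)^2)^{q_\symi/2}$, the difference $\hat G(x) - G(x) = \sum_{\symi=1}^M \big[(r_\symi(x)^2+\alpha)^{q_\symi/2} - (r_\symi(x)^2)^{q_\symi/2}\big]$ splits into independent scalar terms, and it suffices to bound each one.

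For the upper inequality $\hat G(x) \geq G(x)$, I would invoke monotonicity: since $\alpha > 0$ and $t \mapsto t^{q_\symi/2}$ is nondecreasing on $[0,\infty)$, each bracket above is nonnegative, so the sum is nonnegative.

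For the lower inequality, I would invoke subadditivity of the power map. The hypothesis $q_\symi \leq \bar q = 2$ means the exponent $p := q_\symi/2$ lies in $(0,1]$, so $t \mapsto t^p$ is concave with $0^p = 0$ and therefore subadditive: $(a+b)^p \leq a^p + b^p$ for all $a,b \geq 0$. Taking $a = r_\symi(x)^2$ and $b = \alpha$ bounds each bracket by $\alpha^{q_\symi/2}$, whence $\hat G(x) - G(x) \leq \sum_{\symi=1}^M \alpha^{q_\symi/2}$; rearranging gives $G(x) \geq \hat G(x) - \sum_{\symi=1}^M \alpha^{q_\symi/2}$, as claimed.

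The calculation is short and I do not expect a genuine obstacle once $G(x) = G(0) + \sum_\symi |r_\symi(x)|^{q_\symi}$ is established; the delicate point is understanding why $\bar q = 2$ is needed. It is precisely this value that makes $|r_\symi(x)|^{q_\symi}$ equal to $(r_\symi(x)^2)^{q_\symi/2}$, so that $\hat G$ collapses onto $G$ as $\alpha \to 0$ and the gap is controlled solely by $\alpha$; for general $\bar q$ the exponent in $\hat G$ would be $q_\symi/\bar q$ on $r_\symi(x)^2$, which no longer matches $|r_\symi(x)|^{q_\symi}$ even at $\alpha = 0$, and no clean two-sided bound of this form survives. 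The same constraint $q_\symi/2 \leq 1$ is exactly what licenses the subadditivity step, which is why both the lemma's restriction and the section's ``heavy-tailed'' label trace back to this one condition.
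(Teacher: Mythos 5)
Your proof is correct and takes essentially the same route as the paper's: both reduce \eqref{eq_lem_ineq} to a term-by-term scalar comparison, obtaining the upper bound from monotonicity of $t\mapsto t^{q_\symi/2}$ and the lower bound from $(r_\symi(x)^2+\alpha)^{q_\symi/2}\le (r_\symi(x)^2)^{q_\symi/2}+\alpha^{q_\symi/2}$. The only difference is presentational: you cite subadditivity of the concave power map as a known fact, whereas the paper derives exactly that inequality inline by splitting $(r_\symi^2(x)+\alpha)^{q_\symi/2}$ into the two fractions of \eqref{eq_f_frac} and bounding each denominator, which is precisely the standard proof of subadditivity.
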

\begin{proof}
The first inequality $\hat G(x) \geq G(x)$ holds because both $\alpha > 0$ and $q_\symi > 0$.  The second inequality is obtained by rewriting $\hat G(x)$ as
\begin{align}\label{eq_f_frac}
    G(0) +\! \sum_{\symi=1}^M \frac{r_{\symi}^2(x)}{\lt( r_{\symi}^2(x) \!+\! \alpha \rt)^{1-\frac{q_\symi}{2}}} + \frac{\alpha}{\lt( r_{\symi}^2(x) \!+\! \alpha \rt)^{1-\frac{q_\symi}{2}}}.
\end{align}
The function $G(x)$ can also be rewritten as $G(x) = G(0) +\! \sum_{\symi=1}^M r_{\symi}^2(x)\lt( r_{\symi}^2(x) \rt)^{\frac{q_\symi}{2}-1}$ if $p_h$ follows Definition \ref{def_gnd}, which is greater than the first term of \eqref{eq_f_frac} because the denominator is smaller as $\alpha$ is positive. Hence,
\begin{align}\label{eq_lem_ineq_proof_fin}
    G(x) \geq \hat G(x) - \alpha \lt( r_{\symi}^2(x) +\alpha \rt)^{\frac{q_\symi}{2}-1}.
\end{align}
Since $r_{\symi}^2(x) \geq 0$, \eqref{eq_lem_ineq} holds true.
\end{proof}

Using both the optimality of the fixed points $\hat x$ for $\hat G$ and the bounds on the difference between $G$ and $\hat G$, we can evaluate the optimality of these fixed points.

\begin{theorem}
Under Assumption \textcolor{redrev1}{\ref{ass_multiaffine} and }\ref{ass_optimum} \textcolor{redrev1}{and if $\bar q = 2$}, the corresponding fixed point $\hat x$ of Algorithm \ref{alg_airls_def} is a $\epsilon$-suboptimal estimate of the minimum of $G(x)$, i.e.,
\begin{align}\label{eq_thm_subopt_main}
    | G(\hat x) - \min_x G(x)| \leq \epsilon = \sum_{\symi = 1}^{M} \alpha^{\frac{q_{\symi}}{2}}.
\end{align}
\end{theorem}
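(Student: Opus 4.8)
The plan is to combine the two facts already established: that the fixed point $\hat x$ is a critical point of the surrogate $\hat G$ (Lemma \ref{lem_crit}), and that $\hat G$ sandwiches $G$ from above within a gap of $\epsilon = \sum_{\symi=1}^M \alpha^{q_\symi/2}$ (Lemma \ref{lem_ineq}). Since $x^\ast := \argmin_x G(x)$ is the global minimizer, the lower inequality $G(\hat x) - \min_x G(x) \geq 0$ holds automatically, so only the upper bound $G(\hat x) \leq \min_x G(x) + \epsilon$ requires work, and the absolute value in \eqref{eq_thm_subopt_main} can be dropped at the end.

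The crucial intermediate claim is that $\hat x$ is not merely a critical point of $\hat G$ but its global minimizer. I would establish this in three steps. First, Lemma \ref{lem_crit} gives $\nabla \hat G(\hat x) = 0$. Second, the radial unboundedness of $\hat G$ already exploited in the proof of Theorem \ref{thm_abar} (item \ref{enum_proof_item_1}), together with the fact that at a fixed point the null-space components vanish so that $\|x_\symhb\|_2 = \|x_\symhb^\parallel\|_2$, guarantees that $\hat G$ is coercive on the relevant subspace and hence attains a global minimum, which is necessarily a critical point. Third, I would invoke Assumption \ref{ass_optimum}: although stated for $G$, its justification rests only on the multiaffine residuals $r_\symi$ being ``sufficiently different,'' a structural property shared verbatim by $\hat G$, since $\hat G$ is built from the same $r_\symi$ with $r_\symi^2$ merely replaced by $r_\symi^2 + \alpha$. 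Thus $\hat G$ also possesses a unique critical point, which forces $\hat x$ to coincide with the global minimizer of $\hat G$.

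With $\hat x = \argmin_x \hat G(x)$ in hand, the bound follows from chaining the inequalities of Lemma \ref{lem_ineq}:
\begin{align*}
G(\hat x) \leq \hat G(\hat x) \leq \hat G(x^\ast) \leq G(x^\ast) + \epsilon = \min_x G(x) + \epsilon,
\end{align*}
where the first step uses $\hat G(x) \geq G(x)$, the second uses that $\hat x$ minimizes $\hat G$, and the third uses $\hat G(x) \leq G(x) + \epsilon$. Combined with $G(\hat x) \geq \min_x G(x)$, this yields \eqref{eq_thm_subopt_main}.

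I expect the main obstacle to be the second claim, namely rigorously transferring the uniqueness part of Assumption \ref{ass_optimum} from $G$ to $\hat G$ and handling the null-space directions so that ``critical point of $\hat G$'' genuinely upgrades to ``global minimizer of $\hat G$.'' If one were unwilling to assume uniqueness of the critical point of $\hat G$ directly, an alternative would be a limiting argument as $\alpha \to 0$ showing that the surrogate's critical-point landscape converges to that of $G$; the cleaner route, however, is simply to observe that the hypothesis underlying Assumption \ref{ass_optimum} is a property of the shared residuals $r_\symi$ and therefore applies unchanged to $\hat G$.
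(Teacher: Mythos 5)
Your proposal is correct and follows essentially the same route as the paper: both use Lemma \ref{lem_crit} to place $\hat x$ at the unique critical point of $\hat G$ (transferring Assumption \ref{ass_optimum} to $\hat G$), and then chain the sandwich of Lemma \ref{lem_ineq} as $G(\hat x) \leq \hat G(\hat x) \leq \hat G(x^\star) \leq G(x^\star) + \epsilon$, combined with $G(x^\star) \leq G(\hat x)$. Your added justification for why Assumption \ref{ass_optimum} carries over to $\hat G$ (coercivity plus the shared residual structure) is in fact more careful than the paper's one-sentence assertion of the same fact.
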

\begin{proof}
Let $x^\star$ be the exact minimum of $G(x)$. Assumption \ref{ass_optimum} implies that, similar to $G$, $\hat G$ also has only one critical point, which is a minimum. Therefore, Lemmas \ref{lem_crit} and \ref{lem_ineq} show that,
\begin{align*}
    G(\hat x)
    \hspace{-10pt}\underbrace{\leq}_{\mt{Lemma \ref{lem_ineq}}} \hspace{-10pt} 
    \hat G(\hat x) 
    \hspace{-10pt}\underbrace{\leq}_{\mt{Lemma \ref{lem_crit}}} \hspace{-10pt}
    \hat G(x^\star)
    \hspace{-10pt}\underbrace{\leq}_{\mt{Lemma \ref{lem_ineq}}} \hspace{-10pt} 
    G(x^\star) + \sum_{\symi = 1}^{M} \alpha^{\frac{q_{\symi}}{2}}.
\end{align*}
Moreover, by definition $G(x^\star) \leq G(\hat x)$. Hence, combining all the inequalities yields
\begin{align*}
    G(x^\star) \leq G(\hat x) + \sum_{\symi = 1}^{M} \alpha^{\frac{q_{\symi}}{2}},
\end{align*}
which implies \eqref{eq_thm_subopt_main} and proves the theorem.
\end{proof}


\section{Error variance estimation}\label{sec_variance}
By definition, the exact solution $x^\star$ of the problem \eqref{eq_intro_og_prob} is the realization of a random variable $x$ such that $r_{\symi}(x) \sim p_\symi$ for all $\symi = 1, \dots, n_B$ and $\symhb = 1, \dots, M$. Hence, any estimate $\hat x_\symhb$ has a statistical error $\hat e_\symhb$, defined as the difference of $\hat x_\symhb$ and its ground truth. As the MLE is unbiased under very weak assumptions \citep{mle_consistency}, the expected error $\bb{E}[e_\symhb]$ is often zero. In this section, we aim to characterize the variance of the estimation error $\hat e_\symhb$.

A common approach to solve this problem is to compute the corresponding Fischer information matrix \citep{proba_ml_book}, which is obtained by differentiating the likelihood defined for the MLE problem. This implies that this method can only be used when all $p_\symi$ are sufficiently smooth.

A more generally applicable method to obtain confidence intervals for point estimates such as the MLE is random resampling (e.g., Jackknife, Bootstrapping) \citep{statistical_inference_book}. For all $\symhb = 1,\dots, n_B$, this approach consists of generating many samples $\hat x_{\unaryminus\symhb}^{(\symkit)},\symkit = 1, \dots, N_S$ of $x_{\unaryminus\symhb}$ according to the distributions $p_\symi$ evaluated at the MLE estimate $\hat x_{\unaryminus\symhb}$, and solve \eqref{eq_lem_multi} $N_S$ times for $x_\symhb$. \textcolor{redrev1}{The variations of the corresponding solutions $\hat x_\symhb^{(\symkit)}$ with respect to the error in $\hat x_{\unaryminus\symhb}$} allow one to evaluate the empirical variance
\begin{align}\label{def_empirical_var}
    \!\!\bb{V}[\hat x_i] =\! \sum_{\symkit = 1}^{N_S} \frac{\hat x_\symhb^{(\symkit)} (\hat x_\symhb^{(\symkit)})^\top}{N_S^2} \!-\! \lt(\sum_{\symkit = 1}^{N_S} \frac{\hat x_\symhb^{(\symkit)}}{N_S} \!\rt) \!\!\! \lt(\sum_{\symkit = 1}^{N_S} \frac{\hat x_\symhb^{(\symkit)}}{N_S} \!\rt)^{\!\!\!\!\top} \!\!,\!\!
\end{align}
which gives $\bb{V}[\hat e_i] = \bb{V}[\hat x_i]$. 
Resampling methods are very popular due to their ease of implementation and reliability. However, as the inference problem must be solved $N_S$ times, a variance estimate is at least $N_S$ times more computationally expensive than the point estimate.

In the following proposition, we exploit the least-squares structure of the iterations of Algorithm \ref{alg_airls_def} to obtain an estimate of $\bb{V}[\hat e_i]$ without solving the inference problem for each sample. We use the compact notations $W_\symhb^{(\symkit)}$, $\symC_{\symhb}^{(\symkit)}$, and $\symF_{\symhb}^{(\symkit)}$ for $W(\hat x_\symhb, x_{\unaryminus\symhb}^{(\symkit)})$, $\symC_{\symhb}(x_{\unaryminus\symhb}^{(\symkit)})$, and $\symF_{\symhb}(x_{\unaryminus\symhb}^{(\symkit)})$, respectively, and denote the \textcolor{redrev1}{empirical conditional expectation $\bb E\Big[ {W_\symhb^{(\symkit)}}^{\!\frac{1}{2}}  (F_\symh^{(\symkit)} \hat x_\symh \!-\!C_\symh^{(\symkit)}) \Big| x_{\unaryminus\symhb}^{(\symkit)} \Big] $ of the weighted} residuals by \textcolor{redrev1}{$\bar{R}^{(\symkit)} = \bb 1 \frac{1}{M} \sum_{\symi=1}^M \sqrt{W_{\symi}(\hat x_\symhb, x_{\unaryminus\symhb}^{(\symkit)})} r_{\symi}(\hat x_\symhb, x_{\unaryminus\symhb}^{(\symkit)})
= \frac{1}{M} \bb 1\bb 1^\top {W_\symhb^{(\symkit)}}^{\!\frac{1}{2}}  (F_\symhb^{(\symkit)} \hat x_\symhb - C_\symhb^{(\symkit)})$, where $\bb 1 \in \bb R^M$ is the vector of all ones}. Additionally, recall that, in general, $\text{\sffamily{Var}} \neq \bb{V}$, as defined in Section \ref{subsec_notations}.

\begin{proposition}\label{prop_var_est}
Let $x_{\unaryminus\symhb}^{(\symkit)}$ with $k = 1,\dots,N_S$ be samples generated from any distribution whose density never vanishes and is centered at the MLE. Under Assumption \ref{ass_multiaffine}, one has
\begin{align}\label{eq_var_upd}
&
\hspace{-4pt}\frac{\!\sum\limits_{\symkit=1}^{N_S} 
\!\! p^{(\symkit)} 
\! \bigg(\!\! 
\textcolor{redrev1}{\big(\sigma_\symhb^{(\symkit)}\big)^2}\! \Big(
\!{\symF_\symhb^{(\symkit)}}^{\!\!\top} 
\! W_\symhb^{(\symkit)}
\! {\symF_\symhb^{(\symkit)}} 
\!\Big)^{\!\!\dagger} 
\!\!+ 
\! {\symF_\symhb^{(\symkit)}}^{\!\ddag} 
\! \bar{R}^{(\symkit)} 
\! {\bar{R}}^{{(\symkit)}^{\!\!{\scriptstyle\top}}}\! 
\! {{\symF_\symhb^{(\symkit)}}^{\!\ddag}}^{\!\top}
 \bigg) \!\!\!}
{\sum_{\symkit=1}^{N_S} p^{(\symkit)}}
\nonumber \\ & \!\!\!\!\!
- \!
\frac{\sum_{\symkit=1}^{N_S} \! p^{(\symkit)} \! {{\symF_\symhb^{(\symkit)}}^{\!\ddag}}^{\!\top}
\!\! \bar{R}_\symhb^{(\symkit)} \!\!}
{\sum_{\symkit=1}^{N_S} p^{(\symkit)}}
\;
\frac{\sum_{\symkit=1}^{N_S} \! p^{(\symkit)} \! {\bar{R}}^{{(\symkit)}^{\!\!{\scriptstyle\top}}}\!\!
{{\symF_\symhb^{(\symkit)}}^{\!\ddag}}^{\!\top} \!\!}
{\sum_{\symkit=1}^{N_S} p^{(\symkit)}}
\! \rightarrow \!
\bb{V}[\hat e_\symhb],\!\!\!
\end{align}
as $N_S \rightarrow \infty$ and where $p^{(\symkit)} = \prod_{\symi=1}^M p_\symi \!\lt(\!r_{\symi}(\hat x_\symhb, x_{\unaryminus\symhb}^{(\symkit)}) \!\rt)$ is the likelihood of a sample, and
\begin{align*}
\textcolor{redrev1}{\big(\sigma_\symhb^{(\symkit)}\big)^2} &= 
\textcolor{redrev1}{\textnormal{\sffamily{Var}}\lt[ {W_\symhb^{(\symkit)}}^{\!\frac{1}{2}} (F_\symh^{(\symkit)} \hat x_\symh \!-\!C_\symh^{(\symkit)})\rt]}
\\
{\symF_\symhb^{(\symkit)}}^{\!\ddag} &=\! \Big({\symF_\symhb^{(\symkit)}}^{\!\top}\! W_\symhb^{(\symkit)} {\symF_\symhb^{(\symkit)}}\Big)^{\!\dagger} \!\!{\symF_\symhb^{(\symkit)}}^\top {W_\symhb^{(\symkit)}}^{\textcolor{redrev1}{\!\frac{1}{2}}}\!.
\end{align*}
\end{proposition}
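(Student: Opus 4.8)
The plan is to combine the closed-form least-squares characterisation of the AIRLS fixed point with a law-of-total-variance decomposition, and then to realise the resulting expectations as the $N_S\to\infty$ limits of the self-normalised importance-weighted averages appearing in \eqref{eq_var_upd}. First I would use \eqref{eq_abar_def_short} and Corollary~\ref{cor_simultaneous} to note that, for each resampled nuisance block $x_{\unaryminus\symhb}^{(\symkit)}$, re-solving the block-$\symhb$ subproblem yields the weighted least-squares estimate $\big({F_\symhb^{(\symkit)}}^{\top}W_\symhb^{(\symkit)}F_\symhb^{(\symkit)}\big)^{\dagger}{F_\symhb^{(\symkit)}}^{\top}W_\symhb^{(\symkit)}C_\symhb^{(\symkit)}$, whose deviation from $\hat x_\symhb$ is the image, through ${F_\symhb^{(\symkit)}}^{\ddag}$, of the whitened residual vector ${W_\symhb^{(\symkit)}}^{\frac{1}{2}}\big(F_\symhb^{(\symkit)}\hat x_\symhb - C_\symhb^{(\symkit)}\big)$. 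Evaluating this closed form is far cheaper than re-running the algorithm, and since the ground truth is deterministic $\bb V[\hat e_\symhb]=\bb V[\hat x_\symhb]$, so it suffices to compute the covariance of this image under the resampling law.

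Next I would condition on $x_{\unaryminus\symhb}^{(\symkit)}$ and use the identity $\bb V[\hat e_\symhb]=\bb E[\hat e_\symhb\hat e_\symhb^\top]-\bb E[\hat e_\symhb]\bb E[\hat e_\symhb]^\top$ (equivalently, the law of total variance). Treating the $M$ whitened residuals as an empirical sample of the conditional noise, I would split the residual vector into its empirical mean, broadcast to the constant vector $\bar R^{(\symkit)}$, and a mean-zero fluctuation whose empirical variance is $\big(\sigma_\symhb^{(\symkit)}\big)^2$. Propagating the mean through ${F_\symhb^{(\symkit)}}^{\ddag}$ gives the conditional mean ${F_\symhb^{(\symkit)}}^{\ddag}\bar R^{(\symkit)}$, while the fluctuation, using the pseudoinverse identity ${F_\symhb^{(\symkit)}}^{\ddag}\big({F_\symhb^{(\symkit)}}^{\ddag}\big)^{\top}=\big({F_\symhb^{(\symkit)}}^{\top}W_\symhb^{(\symkit)}F_\symhb^{(\symkit)}\big)^{\dagger}$, gives the conditional variance $\big(\sigma_\symhb^{(\symkit)}\big)^2\big({F_\symhb^{(\symkit)}}^{\top}W_\symhb^{(\symkit)}F_\symhb^{(\symkit)}\big)^{\dagger}$. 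Their sum is exactly the conditional second moment $\bb E[\hat e_\symhb\hat e_\symhb^\top\mid x_{\unaryminus\symhb}^{(\symkit)}]$ collected in the numerator bracket of the first fraction of \eqref{eq_var_upd}, and the subtracted product of fractions is the outer product of the aggregate mean $\bb E[{F_\symhb}^{\ddag}\bar R]$.

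I would then identify each weighted ratio $\frac{\sum_\symkit p^{(\symkit)}(\cdot)^{(\symkit)}}{\sum_\symkit p^{(\symkit)}}$ as a self-normalised estimator. Because the sampling density never vanishes, the likelihood weights $p^{(\symkit)}$ are almost surely finite and positive, so the strong law of large numbers applies separately to the numerator and denominator, and a continuous-mapping (Slutsky) argument gives convergence of each ratio to the expectation taken under the law reweighted by $p$, i.e. the sampling distribution of $\hat x_{\unaryminus\symhb}$ centred at the MLE. Assembling the three limits through the second-moment identity yields the left-hand side of \eqref{eq_var_upd} converging to $\bb V[\hat e_\symhb]$ as $N_S\to\infty$.

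The main obstacle is the \emph{conditional} step: justifying that the scalar empirical variance $\big(\sigma_\symhb^{(\symkit)}\big)^2$ and the empirical mean $\bar R^{(\symkit)}$, both taken across the $M$ residual components at a fixed sample, faithfully estimate the conditional variance and mean of the error. This is legitimate only because Assumption~\ref{ass_multiaffine} makes the residuals independent and the weights of \eqref{eq_w_def} whiten them to a common scale, so that the $M$ components behave like exchangeable draws of a single scaled noise and their empirical moments are consistent; making this exchangeability precise and controlling the finite-$M$ bias is the delicate point. A secondary subtlety is that the self-normalised estimator with the raw likelihood weights $p^{(\symkit)}$ converges to the expectation under the density proportional to $p$ times the proposal, so one must argue that the proposal effectively cancels in the ratio, which holds because it is centred at the MLE, where the posterior mass concentrates.
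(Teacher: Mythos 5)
Your proposal follows essentially the same route as the paper's proof: it exploits the weighted least-squares structure of the AIRLS update to obtain, conditionally on each resampled block $x_{\unaryminus\symhb}^{(\symkit)}$, the mean $-{\symF_\symhb^{(\symkit)}}^{\ddag}\bar R^{(\symkit)}$ and variance $\big(\sigma_\symhb^{(\symkit)}\big)^2\big({\symF_\symhb^{(\symkit)}}^{\top}W_\symhb^{(\symkit)}\symF_\symhb^{(\symkit)}\big)^{\dagger}$ of the error, combines them via the law of total variance, and lets the self-normalized $p^{(\symkit)}$-weighted averages converge to the marginal moments as $N_S\to\infty$. The two subtleties you flag at the end (the finite-$M$ empirical moments across residual components, and the fact that raw likelihood weights converge to an expectation under the proposal-times-$p$ density rather than $p$ itself) are genuine, but the paper's own proof leaves them equally implicit, so your attempt matches the published argument in both structure and level of rigor.
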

\begin{proof}
First, we highlight that the variance of $\hat e_\symhb$ is generated by two different factors: the probability distributions $p_\symi$ resulting in nonzero realizations of the residuals $r_{\symi}(\hat x)$, and the uncertainty on the estimates $\hat x_{\unaryminus \symhb}$. Second, we observe that Algorithm \ref{alg_airls_def} relies on the least squares \textcolor{redrev1}{fit of the weighted regression model ${W_\symhb^{(\symkit)}}^{\!\frac{1}{2}}  C_\symh^{(\symkit)} = {W_\symhb^{(\symkit)}}^{\!\frac{1}{2}}  F_\symh^{(\symkit)} \hat x_\symh^{(\symkit)} +\, \varepsilon_\symh^{(\symkit)}\!$. In this setting, the estimation error $\hat e_\symhb^{(\symkit)} = \hat x_\symhb^{(\symkit)} - x_\symhb$ has an expectation and variance conditionned on $x_{\unaryminus\symhb}^{(\symkit)}$ given by}
\textcolor{redrev1}{
\begin{subequations}\label{eq_ls_var_bias}
\begin{align}
    \bb{E}[\hat e_\symhb^{(\symkit)}| x_{\unaryminus\symhb}^{(\symkit)}]
    &= \bb{E}[\hat x_\symhb^{(\symkit)} - \hat x_\symhb| x_{\unaryminus\symhb}^{(\symkit)}] + \underbrace{\bb{E}[\hat x_\symhb - x_\symhb| x_{\unaryminus\symhb}^{(\symkit)}]}_{=0},
    \nonumber \\[-12pt]
    &= \bb E\!\Big[\! \overbrace{{\symF_\symhb^{(\symkit)}}^{\!\ddag} {W_\symhb^{(\symkit)}}^{\!\frac{1}{2}}  (C_\symh^{(\symkit)} \!-\! F_\symh^{(\symkit)} \hat x_\symh)}^{=\hat x_\symhb^{(\symkit)} - \hat x_\symhb}\Big|x_\symhb^{(\symkit)}\!\Big],
    \nonumber \\
    &= -{\symF_\symhb^{(\symkit)}}^{\!\ddag} \bar{R}_\symhb^{(\symkit)},
    \\
    \!\!\bb{V}[\hat e_\symhb^{(\symkit)}| x_{\unaryminus\symhb}^{(\symkit)}]
    &= {{\symF_\symhb^{(\symkit)}}^{\!\ddag}}^{\!\top} \mt{\sffamily{Var}}\Big[ {W_\symhb^{(\symkit)}}^{\!\frac{1}{2}}  (F_\symh^{(\symkit)} \hat x_\symh \!-\!C_\symh^{(\symkit)}) \Big] {\symF_\symhb^{(\symkit)}}^{\!\ddag} \!,
    \nonumber \\
    &= {\big(\sigma_\symhb^{(\symkit)}\big)^2} \lt(\! {\symF_\symhb^{(\symkit)}}^{\!\top} \! W_\symhb^{(\symkit)} \symF_\symhb^{(\symkit)} \!\rt)^{\!\!\dagger} \!\!,\!
\end{align}
\end{subequations}
respectively.}
%
Finally, in order to compute the marginalized variance $\bb{V}[\hat x_\symhb \!-\! x_\symhb]$, one can use the Law of Total Variance (LTV) \citep{ltv_book}
\begin{align}
    \!\! \bb{V}[\hat e_\symhb] \!=\! \bb{E}\!\!\lt[\bb{V}\!\!\lt[\hat e_\symhb| x_{\unaryminus\symhb}^{(\symkit)}\rt]\rt] \!+\! \bb{V}\!\!\lt[\bb{E}\!\!\lt[\hat e_\symhb| x_{\unaryminus\symhb}^{(\symkit)}\rt]\rt].\!\!
\end{align}
The conditional (i.e., inner) variance and expectations is computed using \eqref{eq_ls_var_bias}, and as $N_S \rightarrow \infty$, the marginal (i.e., outer) ones can be obtained by using the empirical formulae
\begin{subequations}\label{eq_empirical_proof}
\begin{align}\label{eq_empirical_exp_proof}
\!\bb{E}\!\!\lt[\bb{V}\!\!\lt[\hat e_\symhb | x_{\unaryminus\symhb}^{(\symkit)}\!\rt]\!\rt] &\!=\! \frac{\sum_{\symkit=1}^{N_S} \!p^{(\symkit)}  \bb{V}\!\!\lt[\hat e_\symhb^{\textcolor{redrev1}{(\symkit)}} | x_{\unaryminus\symhb}^{(\symkit)}\rt]\!}
{\sum_{\symkit=1}^{N_S} \!p^{(\symkit)}},\! 
\\ \label{eq_empirical_var_proof}
\!\bb{V}\!\!\lt[\bb{E}\!\!\lt[\hat e_\symhb| x_{\unaryminus\symhb}^{(\symkit)} \!\rt] \!\rt] &\!=\! 
\frac{\sum_{\symkit=1}^{N_S} p^{(\symkit)} 
\bb{E}\!\!\lt[\hat e_\symhb^{\textcolor{redrev1}{(\symkit)}}| x_{\unaryminus\symhb}^{(\symkit)}\rt] 
\bb{E}\!\!\lt[\hat e_\symhb^{\textcolor{redrev1}{(\symkit)}}| x_{\unaryminus\symhb}^{(\symkit)}\rt]^{\!\top} }
{\sum_{\symkit=1}^{N_S} p^{(\symkit)}} \!\!
\\ \nonumber
&\hspace{-20pt} \!-\!
\frac{\sum_{\symkit=1}^{N_S} \!p^{(\symkit)} 
\bb{E}\!\!\lt[\hat e_\symhb^{\textcolor{redrev1}{(\symkit)}}| x_{\unaryminus\symhb}^{(\symkit)}\rt] }
{\sum_{\symkit=1}^{N_S} \!p^{(\symkit)}} 
\frac{\sum_{\symkit=1}^{N_S} \!p^{(\symkit)} 
\bb{E}\!\!\lt[\hat e_\symhb^{\textcolor{redrev1}{(\symkit)}}| x_{\unaryminus\symhb}^{(\symkit)}\rt]^{\!\!\top} \!\!\!}
{\sum_{\symkit=1}^{N_S} \!p^{(\symkit)}}
.
\end{align}
\end{subequations}
Because $p^{(\symkit)}$ appears in \eqref{eq_empirical_proof}, the empirical variances become exact when the number of samples approaches infinity and their distribution is supported by $\bb R^{\sum_{j\neq i} n_j}$ \citep{sampling_book}. Hence, one has \eqref{eq_empirical_exp_proof} $+$ \eqref{eq_empirical_var_proof} $\rightarrow \bb{V}[\hat e_\symhb]$, which concludes the proof.
\end{proof}

Requiring to compute a pseudoinverse for each sample $\symkit$ can be burdensome when $N_S$ is large. In practice, the samples $x_{\unaryminus\symhb}^{(\symkit)}$ have to be taken very close to the MLE $\hat x_{\unaryminus\symhb}$ (i.e., where $p^{(\symkit)} \not \!\ll \prod_{\symi=1}^M p_\symi( r_\symi(\hat x))$) to avoid having a small denominator in \eqref{eq_var_upd}. Additionally, if the samples are close to the MLE, one has $\symF_\symhb^{(\symkit)} \approx \symF_\symhb(\hat x_{\unaryminus\symhb})$. Thus, ${\symF_\symhb^{(\symkit)}}^{\!\ddag}$ in \eqref{eq_var_upd} can be approximated using the first order Taylor expansion
\begin{align}\label{eq_inverse_approx}
    \big(\!{\symF_\symhb^{(\symkit)}}^{\!\!\top} \! W_\symhb^{(\symkit)} {\symF_\symhb^{(\symkit)}}\big)^{\!\dagger} \approx&\; 
    \big(\symF_\symhb^\top \!\!(\hat x_{\unaryminus\symhb}) W_\symhb {\symF_\symhb}(\hat x_{\unaryminus\symhb}) \big)^{\!\dagger} 
    \\ \nonumber
    &\!\!\!\! \!-\! \Big( \! {\symF_\symhb^{(\symkit)}}^{\!\!\top} \! W_\symhb^{(\symkit)} \! {\symF_\symhb^{(\symkit)}} \!\!-\! \symF_\symhb^\top \!\!(\hat x_{\unaryminus\symhb}) W_\symhb {\symF_\symhb}(\hat x_{\unaryminus\symhb}) \!\Big)\!.
\end{align}

The approximation, \eqref{eq_inverse_approx} improves the computational complexity significantly as only one pseudoinverse is needed for all samples. \textcolor{redrev1}{The accuracy of both \eqref{eq_var_upd} and \eqref{eq_inverse_approx} is discussed in Section \ref{subsec_results_econ}.}

\section{Applications and experiments}\label{section_results}

In this section, we present four applications for AIRLS, which demonstrate the efficacy of AIRLS in terms of speed, scaling and robustness. We begin by considering specific instances of Examples \ref{example_eiv_sysid} and \ref{example_low_rank_tensor} related to practical engineering problems. Moreover, we present two examples from economics and environmental science with more complex likelihoods, which highlights the broadness of problems that AIRLS can address.

\subsection{Online system identification with outliers}\label{subsec_results_sysid}

In \cite{cdc_paper_recursive}, AIRLS is applied to the error-in-variables online system identification problem of estimating the matrices $A \in \bb R^{n_x\times n_x}, B \in \bb R^{n_x \times n_u}$ in the system $x_{\symt+1} = A x_{\symt} + B u_{\symt}$ from noisy measurements of state and control variables
\begin{align}
[\tilde x_{\symt}, \tilde u_{\symt}] = [x_{\symt} + \Delta x_{\symt}, u_{\symt} + \Delta u_{\symt}].
\end{align}
To do so, we define the following data matrices
\begin{align}\label{eq_autocorr_def}
    \!\!\matb 
    \tilde C = \sum_{\symt = 0}^T \beta^{T-\symt} \tilde \Gamma_\symt, \hspace{30pt}\mt{ }
    \\
    \tilde Y = [I, 0_{n_x\times n_z}] \tilde C = E_y \tilde C,
    \\
    \tilde Z = [0_{n_z \times n_x}, I] \tilde C = E_z \tilde C,
    \mate : \tilde \Gamma_\symt = \!\lt[\matb  \tilde x_{\symt+1} \\  \tilde x_{\symt} \\ \tilde  u_{\symt} \mate\rt] \!\!\! \lt[\matb  \tilde x_{\symt+1} \\  \tilde x_{\symt} \\  \tilde u_{\symt} \mate\rt]^{\!\!\top}\!\!\!\!\!,\!\!
\end{align}
where $n_z = n_x+n_u$ and $0 < \beta \leq 1$ is a forgetting factor. Given a prior $\Theta_0$, the identification task consists in estimating the parameters $\Theta = [A, B] \in \R^{n_x \times n_z}$ and the filtered measurements $\hat Z$ such that $\Theta \hat Z - \tilde Y$, $\hat Z - \tilde Z$, and $\Theta - \Theta_0$ are Laplace-distributed and zero-expectation. This distribution provides a good robustness to outliers in the measurements $[\tilde x_{\symt}, \tilde u_{\symt}]$ \citep{cdc_paper_recursive}. The likelihood to maximize is therefore given by
\begin{align}\label{eq_regression_sysid_example_likelihood}
    \mc L(\Theta, Z|\tilde Y, \tilde Z, \Theta_0) = \prod_{\symi=1}^{n_x+n_z}  \prod_{\symj=1}^{n_x}&  e^{-|\Theta_\symj Z_{:\symi} - \tilde Y_{:\symi}|} \\ \nonumber
    & \cdot e^{-\frac{|Z_{:\symi} - \tilde Z_{:\symi}|}{n_x}} \! e^{-\frac{|\Theta_\symj - \Theta_{\symj,0}|}{n_x + n_z}} \!,
\end{align}
which is of the form \eqref{eq_intro_og_prob}. We compare the performance of the MLE of \eqref{eq_regression_sysid_example_likelihood} computed with AIRLS to standard methods based on Gaussian distributions (i.e., subspace identification, Kalman filtering, and recursive total least squares) in Figure \ref{fig:cdc_results}. The plot shows the relative parameter estimation error of the parameters of a two-dimensional system when a varying proportion of the state and input measurements is corrupted by outliers, which are uniformly distributed with a magnitude of 100\% of each state's average value.

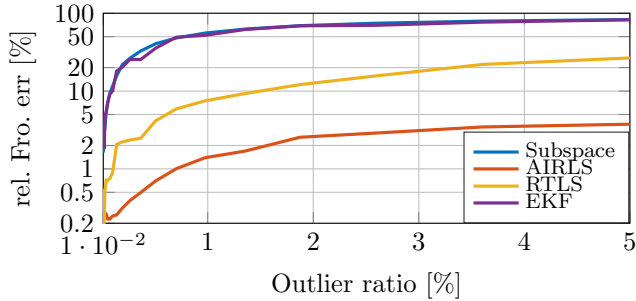
\begin{figure}[ht]
    \centering
%
%
\definecolor{mycolor1}{rgb}{0.00000,0.44700,0.74100}%
\definecolor{mycolor2}{rgb}{0.85000,0.32500,0.09800}%
\definecolor{mycolor3}{rgb}{0.92900,0.69400,0.12500}%
\definecolor{mycolor4}{rgb}{0.49400,0.18400,0.55600}%
\begin{tikzpicture}

\begin{axis}[%
width=0.83\columnwidth,
height=1.1in,
scale only axis,
xmin=0.01,
xmax=5,
xtick={  0.01, 1,   2,   3,  4,  5},
xlabel={Outlier ratio [\%]},
ymode=log,
ymin=0.2,
ymax=100,
ylabel={rel. Fro. err [\%]},
ytick={  0.2, 0.5, 1,   2,   5,  10,  20,  50, 100},
yticklabels={  0.2, 0.5, 1,   2,   5,  10,  20,  50, 100},
yminorticks=true,
axis background/.style={fill=white},
xmajorgrids,
ymajorgrids,
yminorgrids,
legend style={at={(0.684,0)}, anchor=south west, legend cell align=left, align=left, draw=white!15!black, row sep=-0.16cm, font=\scriptsize}
]
\addplot [color=mycolor1, line width=1.2pt]
  table[row sep=crcr]{%
0.01	1.61910051435368\\
0.0138691885653003	2.03359101645307\\
0.0192354391459856	2.80972253691918\\
0.0266779932652033	3.52929626621682\\
0.0370002119138915	5.14651220171146\\
0.0513162915989831	7.07300623334697\\
0.0711715324658232	9.2344696354313\\
0.0987091404249892	11.5766761587765\\
0.136901568167288	15.721174746537\\
0.189871366379743	21.7339429160201\\
0.263336178347187	26.4849203779372\\
0.365225911356268	32.7716586410863\\
0.506538703353372	40.6005929714368\\
0.702528079243062	48.146853777119\\
0.974349440344024	55.6387898925703\\
1.35134361166261	62.5672122521583\\
1.87420393666626	69.2211505514845\\
2.59936878074525	74.5267141660979\\
3.60511357709105	79.2663635040718\\
5	82.9600474415179\\
};
\addlegendentry{Subspace}

\addplot [color=mycolor2, line width=1.2pt]
  table[row sep=crcr]{%
0.01	0.291410112203469\\
0.0138691885653003	0.280375670172788\\
0.0192354391459856	0.266955715636834\\
0.0266779932652033	0.273438727777797\\
0.0370002119138915	0.263220402652873\\
0.0513162915989831	0.22732793680007\\
0.0711715324658232	0.226061504781267\\
0.0987091404249892	0.248228000136179\\
0.136901568167288	0.256001099248256\\
0.189871366379743	0.312388941735475\\
0.263336178347187	0.392219983426649\\
0.365225911356268	0.497251792607906\\
0.506538703353372	0.700422214791405\\
0.702528079243062	1.0054673677974\\
0.974349440344024	1.3910220190101\\
1.35134361166261	1.69061063175565\\
1.87420393666626	2.54809311237077\\
2.59936878074525	2.88972048397072\\
3.60511357709105	3.45756267597291\\
5	3.75213714133917\\
};
\addlegendentry{AIRLS}

\addplot [color=mycolor3, line width=1.2pt]
  table[row sep=crcr]{%
0.01	0.545828268074622\\
0.0138691885653003	0.166137701319153\\
0.0192354391459856	0.253407575221162\\
0.0266779932652033	0.52554244924441\\
0.0370002119138915	0.715666343814436\\
0.0513162915989831	0.71017359850859\\
0.0711715324658232	0.748240580874207\\
0.0987091404249892	0.893349627527041\\
0.136901568167288	2.06411064378967\\
0.189871366379743	2.21884210684622\\
0.263336178347187	2.34800591297986\\
0.365225911356268	2.47486518570053\\
0.506538703353372	4.15161763837868\\
0.702528079243062	5.89953634581419\\
0.974349440344024	7.47793356198546\\
1.35134361166261	9.28056253935202\\
1.87420393666626	12.1097658642429\\
2.59936878074525	15.7020382559705\\
3.60511357709105	21.9812421655024\\
5	26.6514223468153\\
};
\addlegendentry{RTLS}

\addplot [color=mycolor4, line width=1.2pt]
  table[row sep=crcr]{%
0.01	3.20209628090576\\
0.0138691885653003	2.3930207973149\\
0.0192354391459856	1.81155476730571\\
0.0266779932652033	4.88764065516469\\
0.0370002119138915	5.37199190267368\\
0.0513162915989831	6.98128932024942\\
0.0711715324658232	9.02856031002161\\
0.0987091404249892	10.0536816596031\\
0.136901568167288	18.1082040278798\\
0.189871366379743	20.2399996210048\\
0.263336178347187	25.497830279423\\
0.365225911356268	25.4734871181505\\
0.506538703353372	35.5746378471082\\
0.702528079243062	49.2999404411758\\
0.974349440344024	52.0227380169181\\
1.35134361166261	61.833955235916\\
1.87420393666626	68.893413374871\\
2.59936878074525	70.0328179012107\\
3.60511357709105	76.880617737516\\
5	82.3608269849719\\
};
\addlegendentry{EKF}

\end{axis}
\end{tikzpicture}%
    \caption{Relative Frobenius error of the parameter estimates of a double-integrator system using 50 thousand samples for subspace identification \citep{subspace_compa}, AIRLS, Recursive Total Least Squares (RTLS) \citep{rtls_compa}, and the Extended Kalman Filter (EKF) \citep{ekf_compa}. The data has a proportion of outliers up to 5\%. The vertical axis is in log scale.}
    \label{fig:cdc_results}
\end{figure}


\vspace{-8pt}
\subsection{Matrix regression in power systems}\label{subsec_results_pes}
\vspace{-7pt}
The admittance matrix containing all the electrical parameters of a distribution grid is often not known by the operators. Identifying the admittance matrix automatically from voltage and current measurements allows for the optimization of the energy production without requiring too significant investments in modelling. The resulting estimates must however follow some characteristics common to all distribution grids such as sparsity \citep{tomlin}. \cite{paper_tsg} shows that the Bayesian EIV regression of the current on the voltage can produce sufficiently precise Maximum A Posteriori (MAP) estimates. In mathematical terms, this means that one must maximize the likelihood
\vspace{-4pt}
\begin{align}\label{eq_prob_tsg}
    \!\!\mc L(V, Y|\tilde V, \tilde I) =\! \prod_{\symi = 1}^M \! e^{-\|\tilde I_\symi - V Y_\symi\|_2^2} 
    e^{-\|V_\symi - \tilde V_\symi\|_2^2}  e^{-\|Y_\symi\|_1}\!, \!\!
\end{align}
where $V, I \in \R^{M \times N}$ are the nodal voltage and current data matrices containing $N$ samples, their noisy observations are $\tilde V, \tilde I$, and $Y \in \R^{M \times M}$ is the admittance matrix to estimate. Figure \ref{fig:tsg_results} (from \cite{paper_tsg}) shows that the addition of the sparsity promoting prior in the MAP estimate provide a significant improvement over the MLE without this prior, which only uses Gaussian distributions. Moreover, Table \ref{table_tsg} shows that AIRLS computes the MAP estimate \eqref{eq_prob_tsg} significantly faster than other methods adapted to the problem.
\begin{table}[H]
\caption{Comparison of the execution speed of the Block Coordinate Descent (BCD) \citep{S_TLS}, AIRLS, and ADMM \citep{admm_def} to maximize \eqref{eq_prob_tsg} with $M = 9$ and $N = 400$.}
\vspace{4pt}
\centering
\begin{tabular}{|c||c|c|}
    \hline
    Algorithm & iterations to convergence & iterations/second \\
    \hline
    BCD & $\sim$10000 & 1.25 \\
    \hline
    AIRLS & $\sim$10000 & 30 \\
    \hline
    ADMM & $\sim$30000 & 28 \\
    \hline
\end{tabular}
\label{table_tsg}
\end{table}
\vspace{-5pt}
\begin{figure}[H]
\pgfplotsset{
compat=1.11,
legend image code/.code={
\draw[mark repeat=2,mark phase=2]
plot coordinates {
(0cm,0cm)
(0.15cm,0cm)        
(0.3cm,0cm)         
};%
}
}

\begin{tikzpicture}

\begin{axis}[
legend cell align={left},
legend columns=2,
legend style={fill opacity=0.8, draw opacity=1, text opacity=1, at={(1,0.14), font=\small}, anchor=east, draw=white!80!black},
log basis x={10},
log basis y={10},
tick align=outside,
width=8cm,
height=5cm,
tick pos=left,
x grid style={white!69.0196078431373!black},
xmajorgrids,
xmin=1e-05, xmax=0.001,
xmode=log,
xtick style={color=black},
y grid style={white!69.0196078431373!black},
ymajorgrids,
ymin=0.01, ymax=1.22304407455456,
xtick={0.00001,0.00002,0.00005,0.0001,0.0002,0.0005,0.001},
xticklabels={$10^{-5}$,$2\!\cdot\! 10^{-5}$, $5\!\cdot\! 10^{-5}$, $10^{-4}$, $2\!\cdot\! 10^{-4}$, $5\!\cdot\! 10^{-4}$, $10^{-3}$},
ymode=log,
ytick style={color=black},
ytick={0.01,0.02,0.05,0.1,0.2,0.5,1},
yticklabels={1\%,2\%,5\%,10\%,20\%,50\%,100\%},
xlabel={noise to signal ratio},
ylabel={relative estimation error}]
]
\path [draw=black, fill=black, opacity=0.2]
(axis cs:1e-05,0.0489842193570679)
--(axis cs:1e-05,0.0474157806429321)
--(axis cs:2e-05,0.150257670780787)
--(axis cs:5e-05,0.471762255228189)
--(axis cs:0.0001,0.72848141447824)
--(axis cs:0.0002,0.884079015518638)
--(axis cs:0.0005,0.969093798079768)
--(axis cs:0.001,0.989841886116992)
--(axis cs:0.001,0.990158113883008)
--(axis cs:0.001,0.990158113883008)
--(axis cs:0.0005,0.969906201920232)
--(axis cs:0.0002,0.886720984481362)
--(axis cs:0.0001,0.734418585521759)
--(axis cs:5e-05,0.481337744771811)
--(axis cs:2e-05,0.156442329219213)
--(axis cs:1e-05,0.0489842193570679)
--cycle;

\path [draw=fillblue, fill=fillblue, opacity=1.0]
(axis cs:1e-05,0.0181138357147217)
--(axis cs:1e-05,0.0144861642852783)
--(axis cs:2e-05,0.0178133931252681)
--(axis cs:5e-05,0.0324907630408086)
--(axis cs:0.0001,0.0604252717712433)
--(axis cs:0.0002,0.119290537823628)
--(axis cs:0.0005,0.312522590272842)
--(axis cs:0.001,0.781395318220831)
--(axis cs:0.001,0.935454681779169)
--(axis cs:0.001,0.935454681779169)
--(axis cs:0.0005,0.331977409727158)
--(axis cs:0.0002,0.123659462176372)
--(axis cs:0.0001,0.0627247282287567)
--(axis cs:5e-05,0.0342592369591914)
--(axis cs:2e-05,0.0207866068747318)
--(axis cs:1e-05,0.0181138357147217)
--cycle;

\path [draw=red, fill=red, opacity=0.2]
(axis cs:1e-05,0.0585284677683639)
--(axis cs:1e-05,0.0558215322316361)
--(axis cs:2e-05,0.1560751000016)
--(axis cs:5e-05,0.469397341177589)
--(axis cs:0.0001,0.724730151519017)
--(axis cs:0.0002,0.881558992781175)
--(axis cs:0.0005,0.968255033785326)
--(axis cs:0.001,0.989496464289287)
--(axis cs:0.001,0.989853535710714)
--(axis cs:0.001,0.989853535710714)
--(axis cs:0.0005,0.968994966214674)
--(axis cs:0.0002,0.884191007218825)
--(axis cs:0.0001,0.730669848480983)
--(axis cs:5e-05,0.478802658822411)
--(axis cs:2e-05,0.1623248999984)
--(axis cs:1e-05,0.0585284677683639)
--cycle;

\path [draw=fillgreen, fill=fillgreen, opacity=1.0]
(axis cs:1e-05,0.0195454769981818)
--(axis cs:1e-05,0.0154045230018182)
--(axis cs:2e-05,0.0155048763060678)
--(axis cs:5e-05,0.0179358996070273)
--(axis cs:0.0001,0.0246114661349286)
--(axis cs:0.0002,0.0424619603388562)
--(axis cs:0.0005,0.154629095035332)
--(axis cs:0.001,0.493550858982598)
--(axis cs:0.001,0.660899141017402)
--(axis cs:0.001,0.660899141017402)
--(axis cs:0.0005,0.183470904964668)
--(axis cs:0.0002,0.0514380396611438)
--(axis cs:0.0001,0.0280885338650714)
--(axis cs:5e-05,0.0209641003929727)
--(axis cs:2e-05,0.0193451236939322)
--(axis cs:1e-05,0.0195454769981818)
--cycle;

\addplot [semithick, black]
table {%
1e-05 0.0482
2e-05 0.15335
5e-05 0.47655
0.0001 0.73145
0.0002 0.8854
0.0005 0.9695
0.001 0.99
};
\addlegendentry{OLS}
\addplot [semithick, blue]
table {%
1e-05 0.0163
2e-05 0.0193
5e-05 0.033375
0.0001 0.061575
0.0002 0.121475
0.0005 0.32225
0.001 0.858425
};
\addlegendentry{MLE}
\addplot [semithick, red]
table {%
1e-05 0.057175
2e-05 0.1592
5e-05 0.4741
0.0001 0.7277
0.0002 0.882875
0.0005 0.968625
0.001 0.989675
};
\addlegendentry{Lasso}
\addplot [semithick, green!50.1960784313725!black]
table {%
1e-05 0.017475
2e-05 0.017425
5e-05 0.01945
0.0001 0.02635
0.0002 0.04695
0.0005 0.16905
0.001 0.577225
};
\addlegendentry{MAP}
\end{axis}

\end{tikzpicture}

\pgfplotsset{
compat=1.11,
legend image code/.code={
\draw[mark repeat=2,mark phase=2]
plot coordinates {
(0cm,0cm)
(0.3cm,0cm)        
(0.6cm,0cm)         
};%
}
}
\vspace{-5pt}
\caption{Relative estimation error of power grid parameters using Ordinary Least Squares (OLS), Least Absolute Shrinkage and Selection Operator (Lasso), MLE, and MAP.}
\label{fig:tsg_results}
\end{figure}
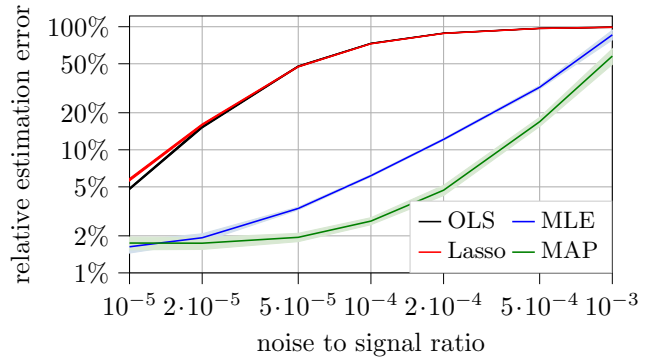
\vspace{-6pt}

\subsection{Economics supply demand problem}\label{subsec_results_econ}

\vspace{-5pt}
In this section, we compare various zeroth order optimization methods for the MLE of a graphical model for a supply-demand problem (e.g., for food harvest \citep{econ_bn_example}). In this model, $S \in  \bb R^T$ is the supply amounts of a good for $\symt = 1, \dots, T$ and $P \in \bb R^{n_T \times T}$ are the prices of $n_T$ different suppliers at each time, which are modified by various taxes or subsidies $\tau \in \bb R^{n_T}$. This all leads to a demand $D \in \bb R^{n_T \times T}$, which corresponds to the quantities of the good that are sold by each seller for a price $P_\symt$. The model is defined by the following distributions, where the value of the parameters are chosen according to \citep{econ_bn_example}. Figure \ref{fig:econ_example} represents this statistical model graphically as a Bayesian network.
\vspace{-5pt}
\begin{subequations}\label{eq_prob_econ_def}
\begin{align}\label{eq_prob_econ_def_a}
    p(S_\symt) &\propto e^{-\lt(\frac{(S_\symt - 100)^2}{200}\rt)^\frac{1}{5}},
    \\ \label{eq_prob_econ_def_c}
    p(P_\symt|S_\symt, \tau) &\propto e^{-\frac{\lt\|(20 - 0.1 S_\symt) \frac{\bb 1 + 0.01\tau}{n_T} - P_\symt \rt\|_2^2}{0.02}}, 
    \\ \label{eq_prob_econ_def_d}
    p(D_\symt | P_\symt) &\propto e^{-\frac{\|200\cdot \bb 1 - 10 P_\symt - D_\symt\|_1}{\sqrt{2}}}.
\end{align}
\end{subequations}
Moreover, $p(\tau)$ is a non-informative prior as defined in \cite{noninformative_priors}.
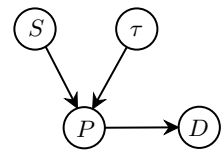
\begin{figure}[H]
    \centering
    \tikzset{every picture/.style={line width=0.75pt}} 

\begin{tikzpicture}[x=0.75pt,y=0.75pt,yscale=-1,xscale=1]

\draw   (129.7,70.6) .. controls (129.7,64.86) and (134.36,60.2) .. (140.1,60.2) .. controls (145.84,60.2) and (150.5,64.86) .. (150.5,70.6) .. controls (150.5,76.34) and (145.84,81) .. (140.1,81) .. controls (134.36,81) and (129.7,76.34) .. (129.7,70.6) -- cycle ;
\draw   (187.7,70.4) .. controls (187.7,64.66) and (192.36,60) .. (198.1,60) .. controls (203.84,60) and (208.5,64.66) .. (208.5,70.4) .. controls (208.5,76.14) and (203.84,80.8) .. (198.1,80.8) .. controls (192.36,80.8) and (187.7,76.14) .. (187.7,70.4) -- cycle ;
\draw    (150.5,70.6) -- (184.7,70.42) ;
\draw [shift={(187.7,70.4)}, rotate = 179.69] [fill={rgb, 255:red, 0; green, 0; blue, 0 }  ][line width=0.08]  [draw opacity=0] (7.14,-3.43) -- (0,0) -- (7.14,3.43) -- (4.74,0) -- cycle    ;
\draw   (104.87,20.77) .. controls (104.87,15.02) and (109.52,10.37) .. (115.27,10.37) .. controls (121.01,10.37) and (125.67,15.02) .. (125.67,20.77) .. controls (125.67,26.51) and (121.01,31.17) .. (115.27,31.17) .. controls (109.52,31.17) and (104.87,26.51) .. (104.87,20.77) -- cycle ;
\draw    (120.67,29.74) -- (135.29,58.07) ;
\draw [shift={(136.67,60.74)}, rotate = 242.7] [fill={rgb, 255:red, 0; green, 0; blue, 0 }  ][line width=0.08]  [draw opacity=0] (7.14,-3.43) -- (0,0) -- (7.14,3.43) -- (4.74,0) -- cycle    ;
\draw   (156.2,20.77) .. controls (156.2,15.02) and (160.86,10.37) .. (166.6,10.37) .. controls (172.34,10.37) and (177,15.02) .. (177,20.77) .. controls (177,26.51) and (172.34,31.17) .. (166.6,31.17) .. controls (160.86,31.17) and (156.2,26.51) .. (156.2,20.77) -- cycle ;
\draw    (161.67,29.74) -- (145.79,58.45) ;
\draw [shift={(144.33,61.07)}, rotate = 298.95] [fill={rgb, 255:red, 0; green, 0; blue, 0 }  ][line width=0.08]  [draw opacity=0] (7.14,-3.43) -- (0,0) -- (7.14,3.43) -- (4.74,0) -- cycle    ;

\draw (134.5,65.33) node [anchor=north west][inner sep=0.75pt]  [font=\footnotesize] [align=left] {$\displaystyle P$};
\draw (191,65.3) node [anchor=north west][inner sep=0.75pt]  [font=\footnotesize] [align=left] {$\displaystyle D$};
\draw (110.33,14.5) node [anchor=north west][inner sep=0.75pt]  [font=\footnotesize] [align=left] {$\displaystyle S$};
\draw (161.67,17.67) node [anchor=north west][inner sep=0.75pt]  [font=\footnotesize] [align=left] {$\displaystyle \tau $};

\end{tikzpicture}
    \caption{Example of a supply demand Bayesian network model with taxes or subsidies.}
    \label{fig:econ_example}
\end{figure}
To generate the data, we sample $S_t$ over the prior distribution \eqref{eq_prob_econ_def_a}, $\tau$ over a normal distribution around 10\% with a standard deviation of 3\%, and use the mode of \eqref{eq_prob_econ_def_c} and \eqref{eq_prob_econ_def_d} for $P_t$ and $D_t$. In order to test Algorithm \ref{alg_airls_def}, we use the realizations of $S_1, \dots, S_T$ and $D_1, \dots, D_T$ to infer the values of $P_1, \dots, P_T$ and $\tau$.

The joint likelihood of a Bayesian network is given by the product of the conditional distributions of each node. Hence, because each distribution follows Assumption \ref{ass_multiaffine}, the joint likelihood
\begin{align} \label{eq_prob_econ}
    \hat P, \hat \tau = \argmin_{P_\symt, \tau} \prod_{\symt = 1}^T \prod_{i \in \{a, b, c\}} (\ref{eq_prob_econ_def}i),
\end{align}
follows \eqref{eq_intro_og_prob}. This experiment is repeated $10$ times. The solid lines in Figures \ref{fig:comparison_accuracy}, \ref{fig:comparison}, \ref{fig:speed}, and \ref{fig:variance_est} represent the average results, while the minimum and maximum are represented by shaded regions.

\emph{Convergence Speed:} We use AIRLS and three benchmark zeroth order algorithms, i.e., discretization \citep{discretization_def}, sampling \citep{sampling_book}, and zeroth order gradient descent (ZOGD) \citep{zero_order_gd}, to solve \eqref{eq_prob_econ}. Figure \ref{fig:comparison_accuracy} shows that, for this multiaffine problem, the convergence of AIRLS is much faster than all other algorithms.

\begin{figure}[H]
    \centering
    \input{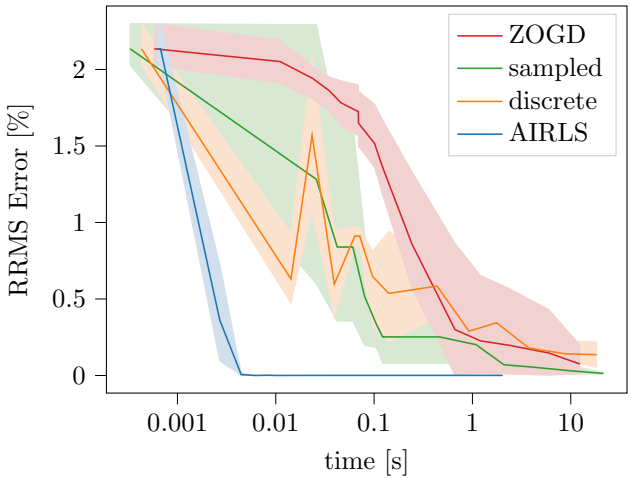}
    \vspace{-15pt}
    \caption{Comparison of convergence speed of four algorithms for the inference of $P_\symt$ in the example \eqref{eq_prob_econ_def} with unknown $\tau$, $T=2$, and $n_T = 1$.}
    \label{fig:comparison_accuracy}
\end{figure}

\emph{Scaling and robustness:}
When $n_T$ or $T$ increase, the sampling and discretization algorithms used in Figure \ref{fig:comparison_accuracy} become very slow and unpractical. We therefore only compare ZOGD to AIRLS for illustrating the scaling of computational time with the dimensionality. Note that each iteration \eqref{eq_abar_def_gen} of AIRLS relies on a least squares problem, which scales with $O(T^3)$. We do not provide a complexity bound for the number of iterations, but the following experiments show that the number of iterations increases only slightly in higher dimensions. Figure \ref{fig:compa_scaling} shows this scaling compared to ZOGD.

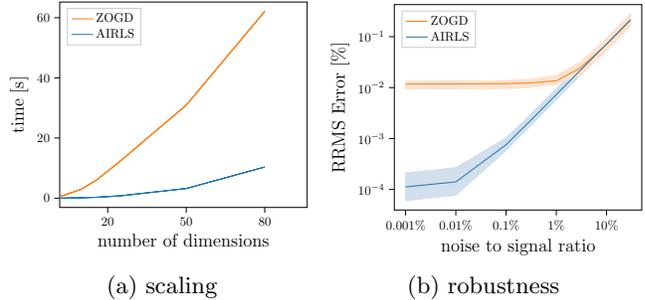
\begin{figure}[H]
     \begin{subfigure}{0.235\textwidth}
\begin{tikzpicture}[scale=0.48]

\definecolor{darkgray176}{RGB}{176,176,176}
\definecolor{darkorange25512714}{RGB}{255,127,14}
\definecolor{lightgray204}{RGB}{204,204,204}
\definecolor{steelblue31119180}{RGB}{31,119,180}

\begin{axis}[
legend cell align={left},
legend style={
  fill opacity=0.8,
  draw opacity=1,
  text opacity=1,
  at={(0.03,0.97)},
  anchor=north west,
  draw=lightgray204
},
log basis x={10},
tick align=outside,
tick pos=left,
xlabel = \large{number of dimensions}, ylabel = \large{time [s]},
x grid style={darkgray176},
xmin=1.66313305803383, xmax=96.2039683037468,
xtick style={color=black},
xtick={20,50,80},
xticklabels={
  \(\displaystyle {20}\),
  \(\displaystyle {50}\),
  \(\displaystyle {80}\)
},
y grid style={darkgray176},
ymin=-3.0848003188769, ymax=65.2185263593992,
ytick style={color=black}
]
\path [draw=steelblue31119180, fill=steelblue31119180, opacity=0.2]
(axis cs:2,0.0198963483174642)
--(axis cs:2,0.0198963483174642)
--(axis cs:4,0.0405352910359701)
--(axis cs:10,0.128075202306112)
--(axis cs:16,0.308940569559733)
--(axis cs:25,0.800132592519124)
--(axis cs:50,3.21003667513529)
--(axis cs:80,10.3569603761037)
--(axis cs:80,10.3569603761037)
--(axis cs:80,10.3569603761037)
--(axis cs:50,3.21003667513529)
--(axis cs:25,0.800132592519124)
--(axis cs:16,0.308940569559733)
--(axis cs:10,0.128075202306112)
--(axis cs:4,0.0405352910359701)
--(axis cs:2,0.0198963483174642)
--cycle;

\path [draw=darkorange25512714, fill=darkorange25512714, opacity=0.2]
(axis cs:2,0.617422342300415)
--(axis cs:2,0.617422342300415)
--(axis cs:4,1.18291433652242)
--(axis cs:10,2.99285324414571)
--(axis cs:16,6.20538544654846)
--(axis cs:25,12.4825989405314)
--(axis cs:50,30.9461341698964)
--(axis cs:80,62.1138296922048)
--(axis cs:80,62.1138296922048)
--(axis cs:80,62.1138296922048)
--(axis cs:50,30.9461341698964)
--(axis cs:25,12.4825989405314)
--(axis cs:16,6.20538544654846)
--(axis cs:10,2.99285324414571)
--(axis cs:4,1.18291433652242)
--(axis cs:2,0.617422342300415)
--cycle;

\addplot [semithick, darkorange25512714, opacity=1.0]
table {%
2 0.617422342300415
4 1.18291433652242
10 2.99285324414571
16 6.20538544654846
25 12.4825989405314
50 30.9461341698964
80 62.1138296922048
};
\addlegendentry{ZOGD}
\addplot [semithick, steelblue31119180, opacity=1.0]
table {%
2 0.0198963483174642
4 0.0405352910359701
10 0.128075202306112
16 0.308940569559733
25 0.800132592519124
50 3.21003667513529
80 10.3569603761037
};
\addlegendentry{AIRLS}
\end{axis}

\end{tikzpicture}
    \caption{scaling}
    \label{fig:compa_scaling}
    \end{subfigure}
     \begin{subfigure}{0.235\textwidth}
\begin{tikzpicture}[scale=0.48]

\definecolor{darkgray176}{RGB}{176,176,176}
\definecolor{darkorange25512714}{RGB}{255,127,14}
\definecolor{lightgray204}{RGB}{204,204,204}
\definecolor{steelblue31119180}{RGB}{31,119,180}

\begin{axis}[
legend cell align={left},
legend style={
  fill opacity=0.8,
  draw opacity=1,
  text opacity=1,
  at={(0.03,0.97)},
  anchor=north west,
  draw=lightgray204
},
log basis x={10},
log basis y={10},
tick align=outside,
tick pos=left,
xlabel = \large{noise to signal ratio}, ylabel = \large{RRMS Error [\%]},
x grid style={darkgray176},
xmin=5.97233193468577e-06, xmax=0.502316353613363,
xmode=log,
xtick style={color=black},
xtick={1e-05,0.0001,0.001,0.01,0.1},
xticklabels={
  \(\displaystyle {0.001\%}\),
  \(\displaystyle {0.01\%}\),
  \(\displaystyle {0.1\%}\),
  \(\displaystyle {1\%}\),
  \(\displaystyle {10\%}\)
},
y grid style={darkgray176},
ymin=3.94734507196118e-07, ymax=0.00425910609660482,
ymode=log,
ytick style={color=black},
ytick={1e-08,1e-07,1e-06,1e-05,0.0001,0.001,0.01,0.1},
yticklabels={
  \(\displaystyle {10^{-6}}\),
  \(\displaystyle {10^{-5}}\),
  \(\displaystyle {10^{-4}}\),
  \(\displaystyle {10^{-3}}\),
  \(\displaystyle {10^{-2}}\),
  \(\displaystyle {10^{-1}}\),
  \(\displaystyle {10^{0}}\),
  \(\displaystyle {10^{1}}\)
}
]
\path [draw=fillblue, fill=fillblue, opacity=1.0]
(axis cs:1e-05,2.10575078337223e-06)
--(axis cs:1e-05,6.02037970238719e-07)
--(axis cs:0.0001,7.71325803354904e-07)
--(axis cs:0.001,5.80936711855702e-06)
--(axis cs:0.003,1.73009688112779e-05)
--(axis cs:0.01,5.76454617234537e-05)
--(axis cs:0.03,0.000172645561546445)
--(axis cs:0.1,0.000564642525088084)
--(axis cs:0.3,0.00169247578810454)
--(axis cs:0.3,0.00279254171538824)
--(axis cs:0.3,0.00279254171538824)
--(axis cs:0.1,0.000937121834529868)
--(axis cs:0.03,0.000283655885766338)
--(axis cs:0.01,9.52269716473669e-05)
--(axis cs:0.003,2.9158835950251e-05)
--(axis cs:0.001,1.02718321096164e-05)
--(axis cs:0.0001,2.67875118522196e-06)
--(axis cs:1e-05,2.10575078337223e-06)
--cycle;

\path [draw=fillorange, fill=fillorange, opacity=1.0]
(axis cs:1e-05,0.000134063693496562)
--(axis cs:1e-05,9.36917958998556e-05)
--(axis cs:0.0001,9.39012769906464e-05)
--(axis cs:0.001,9.59958974160049e-05)
--(axis cs:0.003,0.000100684619221205)
--(axis cs:0.01,0.000116631138456405)
--(axis cs:0.03,0.000176212613635398)
--(axis cs:0.1,0.000517125777113812)
--(axis cs:0.3,0.0016245028623861)
--(axis cs:0.3,0.00276907905864175)
--(axis cs:0.3,0.00276907905864175)
--(axis cs:0.1,0.000915269870371176)
--(axis cs:0.03,0.00031589593051753)
--(axis cs:0.01,0.000174643904935267)
--(axis cs:0.003,0.000146249915357473)
--(axis cs:0.001,0.000138100649973959)
--(axis cs:0.0001,0.000134430334831834)
--(axis cs:1e-05,0.000134063693496562)
--cycle;

\addplot [semithick, darkorange25512714, opacity=1.0]
table {%
1e-05 0.00011789386137594
0.0001 0.000118053838435102
0.001 0.00011965494906855
0.003 0.000123222773184733
0.01 0.000136107813397562
0.03 0.000242320850277456
0.1 0.000703868472576396
0.3 0.00211220616304635
};
\addlegendentry{ZOGD}
\addplot [semithick, steelblue31119180, opacity=1.0]
table {%
1e-05 1.12255363652647e-06
0.0001 1.40915495556433e-06
0.001 7.55592889559831e-06
0.003 2.20730170553992e-05
0.01 7.29069888314777e-05
0.03 0.000217858817854696
0.1 0.000717756433140042
0.3 0.00214091304872781
};
\addlegendentry{AIRLS}
\end{axis}

\end{tikzpicture}
    \vspace{-14pt}
    \caption{robustness}
    \label{fig:compa_robust}
    \end{subfigure}
    
    \vspace{-5pt}
    \caption{Comparison of ZOGD and AIRLS for scaling with $T$, and for robustness to noise. The subfigures show (a) the computation time as a function of $T$ and (b) the RRMS error of the estimate of $P_t$ and $\tau$ depending on the average noise in $S_t$ and $D_t$.}
    \label{fig:comparison}
\end{figure}

In practice, the chosen step size for ZOGD influences both its convergence speed and its accuracy. In this experiment, the step size for ZOGD is $0.99995^\symkit$, where $\symkit$ is the iteration number. This decreasing sequence obtained the best accuracy over all considered noise to signal ratios (see Figure \ref{fig:compa_robust}). In contrast, the accuracy of AIRLS only depends on $\alpha$, which does not (or not directly) influence the convergence speed. This parameter is set to $\alpha = 10^{-3}$ in all experiments.

\emph{Convergence rate:} 
In order to analyze the convergence speed of the algorithm, we increase the number of dimensions to $T=4000$ and $n_T=2$. This higher dimensionality increases the number of iterations required to reach convergence to 13, allowing us to plot the error trajectory with enough resolution in Figure \ref{fig:speed}. Additionally, Figure \ref{fig:speed} shows that the convergence is super-linear, as the error decreases faster than $e_0 \cdot 0.7^{k}$ until it reaches the fixed point, where the error due to $\alpha$ is around 0.005\%.

\begin{figure}[H]
    \centering
\begin{tikzpicture}[scale=1]

\definecolor{lightgray204}{RGB}{204,204,204}
\definecolor{darkgray176}{RGB}{176,176,176}
\definecolor{steelblue31119180}{RGB}{31,119,180}

\begin{axis}[
height=0.24\textwidth,
width=0.45\textwidth,
legend cell align={left},
legend style={
  fill opacity=0.8,
  draw opacity=1,
  text opacity=1,
  at={(0.03,0.03)},
  anchor=south west,
  draw=lightgray204
},
log basis y={10},
tick align=outside,
tick pos=left,
x grid style={darkgray176},
xlabel = {time [s]}, ylabel = {RRMS Error [\%]},
xmin=0.00358251397538636, xmax=217.631238261859,
xtick style={color=black},
xtick={0, 40, 80, 120, 160, 200},
scaled x ticks = false,
xticklabels={
  \(\displaystyle {0}\),
  \(\displaystyle {40}\),
  \(\displaystyle {80}\),
  \(\displaystyle {120}\),
  \(\displaystyle {160}\),
  \(\displaystyle {200}\)
},
y grid style={darkgray176},
ymin=5e-7, ymax=1.374948822,
ymode=log,
ytick style={color=black},
ytick={1e-10,1e-9,1e-08,1e-07,1e-06,1e-05,0.0001,0.001,0.01,0.1,1},
yticklabels={
  \(\displaystyle {10^{-8}}\),
  \(\displaystyle {10^{-7}}\),
  \(\displaystyle {10^{-6}}\),
  \(\displaystyle {10^{-5}}\),
  \(\displaystyle {10^{-4}}\),
  \(\displaystyle {10^{-3}}\),
  \(\displaystyle {10^{-2}}\),
  \(\displaystyle {10^{-1}}\),
  \(\displaystyle {10^{0}}\),
  \(\displaystyle {10^{1}}\),
  \(\displaystyle {10^{2}}\)
}
]
\path [draw=fillblue, fill=fillblue, opacity=1.0]
(axis cs:17.301185409228,1.12438122599957)
--(axis cs:0.0,1.374948822)
--(axis cs:0.0,0.199756077)
--(axis cs:17.301185409228,0.0768535503802802)
--(axis cs:33.7429652134577,0.00933907241863748)
--(axis cs:49.8126209894816,9.645643789627e-05)
--(axis cs:66.6618298610051,6.29053049314692e-06)
--(axis cs:82.8589142004649,2.2760985464524e-06)
--(axis cs:99.4393343687057,9.61627264265052e-08)
--(axis cs:115.960547471046,1.10560475227911e-06)
--(axis cs:132.987022169431,1.09638250234049e-06)
--(axis cs:149.871592275302,1.09752452700365e-06)
--(axis cs:166.267604104678,1.09629009749308e-06)
--(axis cs:182.375651892026,5.21544305834064e-07)
--(axis cs:200.477185360591,1.09639740388139e-06)
--(axis cs:217.631238261859,1.09712618635062e-06)
--(axis cs:217.631238261859,0.000200476576951767)
--(axis cs:217.631238261859,0.000200476576951767)
--(axis cs:200.477185360591,0.000435436904811804)
--(axis cs:182.375651892026,0.000197567585223675)
--(axis cs:166.267604104678,0.000510533756035598)
--(axis cs:149.871592275302,0.00395285226328787)
--(axis cs:132.987022169431,0.0218682844156791)
--(axis cs:115.960547471046,0.0634467417038539)
--(axis cs:99.4393343687057,0.12479808321402)
--(axis cs:82.8589142004649,0.200376136711721)
--(axis cs:66.6618298610051,0.416561403072237)
--(axis cs:49.8126209894816,0.683049781925975)
--(axis cs:33.7429652134577,0.914595124259091)
--(axis cs:17.301185409228,1.12438122599957)
--cycle;

\addplot [semithick, steelblue31119180, opacity=1.0]
table {%
0.0 0.511889728
17.301185409228 0.365091114966686
33.7429652134577 0.24279908742194
49.8126209894816 0.146122577266035
66.6618298610051 0.0707564484505691
82.8589142004649 0.0245624904673306
99.4393343687057 0.00674056245046737
115.960547471046 0.00225173924544692
132.987022169431 0.000782697868670418
149.871592275302 0.000177305680236604
166.267604104678 6.07195744228754e-05
182.375651892026 4.5255309009857e-05
200.477185360591 5.18733656635976e-05
217.631238261859 4.55858279930138e-05
};
\addlegendentry{error of AIRLS}

\addplot [semithick, red, opacity=1.0]
table {%
0.0 0.5
217.631238261859 0.005
}; 
\addlegendentry{$e_k = e_0 \cdot 0.7^k$}
\end{axis}

\end{tikzpicture}
    \vspace{-5pt}
    \caption{Average error trajectory of AIRLS for the problem \eqref{eq_prob_econ_def}, and with $n_T = 2$ and $T = 4000$.}
    \label{fig:speed}
\end{figure}
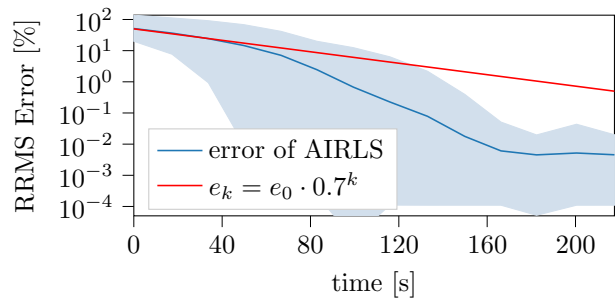

\emph{Variance estimation:} 
We compare the variance computation \eqref{eq_var_upd} and its speed-up \eqref{eq_inverse_approx} for the example \eqref{eq_prob_econ_def} to assess the accuracy of both methods in practice. The baseline used to compare the two estimates is generated by resampling the noise $10$ times and using \eqref{def_empirical_var} on the results. Figure \ref{fig:variance_est} shows that \eqref{eq_var_upd} is quite accurate, while \eqref{eq_inverse_approx} can be conservative for high noise levels.

\begin{figure}[H]
    \centering
\begin{tikzpicture}[scale=0.95]

\definecolor{darkgray176}{RGB}{176,176,176}
\definecolor{darkorange25512714}{RGB}{255,127,14}
\definecolor{forestgreen4416044}{RGB}{44,160,44}
\definecolor{lightgray204}{RGB}{204,204,204}
\definecolor{steelblue31119180}{RGB}{31,119,180}

\begin{axis}[
width=0.48\textwidth,
height=0.36\textwidth,
legend cell align={left},
legend style={
  fill opacity=0.8,
  draw opacity=1,
  text opacity=1,
  at={(0.03,0.97)},
  anchor=north west,
  draw=lightgray204
},
log basis x={10},
log basis y={10},
tick align=outside,
tick pos=left,
xlabel = {noise to signal ratio}, ylabel = {$\|\hat \Sigma\|$},
x grid style={darkgray176},
xmin=1.77827941003892e-05, xmax=5.62341325190349,
xmode=log,
xtick style={color=black},
xtick={0.0001,0.001,0.01,0.1,1},
xticklabels={
  \(\displaystyle {0.001\%}\),
  \(\displaystyle {0.01\%}\),
  \(\displaystyle {0.1\%}\),
  \(\displaystyle {1\%}\),
  \(\displaystyle {10\%}\)
},
y grid style={darkgray176},
ymin=8.33536156949238e-09, ymax=660.679186676286,
ymode=log,
ytick style={color=black},
ytick={1e-11,1e-09,1e-07,1e-05,0.001,0.1,10,1000,100000},
yticklabels={
  \(\displaystyle {10^{-11}}\),
  \(\displaystyle {10^{-9}}\),
  \(\displaystyle {10^{-7}}\),
  \(\displaystyle {10^{-5}}\),
  \(\displaystyle {10^{-3}}\),
  \(\displaystyle {10^{-1}}\),
  \(\displaystyle {10^{1}}\),
  \(\displaystyle {10^{3}}\),
  \(\displaystyle {10^{5}}\)
}
]

\path [draw=fillblue, fill=fillblue, opacity=1.0]
(axis cs:3.16227766016838e-05,5.38586932646094e-05)
--(axis cs:3.16227766016838e-05,2.60817388728525e-08)
--(axis cs:0.0001,5.86437320641909e-08)
--(axis cs:0.000316227766016838,6.60811500943376e-07)
--(axis cs:0.001,7.35480700735254e-06)
--(axis cs:0.00316227766016838,0.000130803617486447)
--(axis cs:0.01,0.00072490775881202)
--(axis cs:0.0316227766016838,0.00650425577316236)
--(axis cs:0.1,0.0538952376172559)
--(axis cs:0.316227766016838,0.0735696608635673)
--(axis cs:1,0.751723540820732)
--(axis cs:3.16227766016838,7.82844313909239)
--(axis cs:3.16227766016838,98.7546292006754)
--(axis cs:3.16227766016838,98.7546292006754)
--(axis cs:1,10.0885677740287)
--(axis cs:0.316227766016838,1.15003846603345)
--(axis cs:0.1,0.37885506272048)
--(axis cs:0.0316227766016838,0.0614505558773961)
--(axis cs:0.01,0.0108053232524022)
--(axis cs:0.00316227766016838,0.000768622488835567)
--(axis cs:0.001,5.68223224239697e-05)
--(axis cs:0.000316227766016838,5.66609305575139e-05)
--(axis cs:0.0001,3.8293320025458e-05)
--(axis cs:3.16227766016838e-05,5.38586932646094e-05)
--cycle;

\path [draw=fillgreen, fill=fillgreen, opacity=1.0]
(axis cs:3.16227766016838e-05,6.34060467741584e-05)
--(axis cs:3.16227766016838e-05,1.00338832306178e-05)
--(axis cs:0.0001,1.51962069411709e-05)
--(axis cs:0.000316227766016838,2.01002665023178e-05)
--(axis cs:0.001,4.88314350003201e-05)
--(axis cs:0.00316227766016838,0.00026671917934671)
--(axis cs:0.01,0.00209527459720252)
--(axis cs:0.0316227766016838,0.0264888648720637)
--(axis cs:0.1,1.8126999328808)
--(axis cs:0.316227766016838,1.15828579545632)
--(axis cs:1,8.79000578399766)
--(axis cs:3.16227766016838,56.7176181392081)
--(axis cs:3.16227766016838,142.662329563143)
--(axis cs:3.16227766016838,142.662329563143)
--(axis cs:1,211.143893788348)
--(axis cs:0.316227766016838,9.26835189296117)
--(axis cs:0.1,102.590528746099)
--(axis cs:0.0316227766016838,0.98787998351204)
--(axis cs:0.01,0.00416192880253018)
--(axis cs:0.00316227766016838,0.000493973838414568)
--(axis cs:0.001,0.000100134934548687)
--(axis cs:0.000316227766016838,7.77870085998582e-05)
--(axis cs:0.0001,6.37960513818699e-05)
--(axis cs:3.16227766016838e-05,6.34060467741584e-05)
--cycle;

\path [draw=fillorange, fill=fillorange, opacity=1.0]
(axis cs:3.16227766016838e-05,6.34059696108493e-05)
--(axis cs:3.16227766016838e-05,1.00339927267525e-05)
--(axis cs:0.0001,1.51940919089205e-05)
--(axis cs:0.000316227766016838,2.0090279104511e-05)
--(axis cs:0.001,4.89923641928694e-05)
--(axis cs:0.00316227766016838,0.000268545408155243)
--(axis cs:0.01,0.00207740296337449)
--(axis cs:0.0316227766016838,0.0177705099074071)
--(axis cs:0.1,0.19007360153697)
--(axis cs:0.316227766016838,1.10740022823306)
--(axis cs:1,7.03710979902715)
--(axis cs:3.16227766016838,52.4584546232392)
--(axis cs:3.16227766016838,95.6137544467895)
--(axis cs:3.16227766016838,95.6137544467895)
--(axis cs:1,11.5283688150863)
--(axis cs:0.316227766016838,1.66370322592412)
--(axis cs:0.1,0.304939791291457)
--(axis cs:0.0316227766016838,0.0462294408605178)
--(axis cs:0.01,0.00417448101507446)
--(axis cs:0.00316227766016838,0.000492807709750962)
--(axis cs:0.001,0.000100037756022182)
--(axis cs:0.000316227766016838,7.77814438538919e-05)
--(axis cs:0.0001,6.37942420849497e-05)
--(axis cs:3.16227766016838e-05,6.34059696108493e-05)
--cycle;

\addplot [semithick, darkorange25512714, opacity=1.0]
table {%
3.16227766016838e-05 2.38364541625326e-05
0.0001 3.14064773800997e-05
0.000316227766016838 4.46961791664447e-05
0.001 6.9913922726716e-05
0.00316227766016838 0.00035397475226012
0.01 0.0031481304822715
0.0316227766016838 0.029597693952082
0.1 0.231166845175033
0.316227766016838 1.34674931244671
1 8.61824695589504
3.16227766016838 72.475355320404
};
\addlegendentry{estimated \eqref{eq_var_upd}}
\addplot [semithick, steelblue31119180, opacity=1.0]
table {%
3.16227766016838e-05 5.09481533145416e-06
0.0001 4.95983465041788e-06
0.000316227766016838 7.10465382415429e-06
0.001 2.9206572938293e-05
0.00316227766016838 0.000370944848984312
0.01 0.00348990914705655
0.0316227766016838 0.0300144912214484
0.1 0.172031048405898
0.316227766016838 0.377198143006394
1 3.45056852830625
3.16227766016838 33.7844929528147
};
\addlegendentry{resampling \eqref{def_empirical_var}}
\addplot [semithick, forestgreen4416044, opacity=1.0]
table {%
3.16227766016838e-05 2.38365477611277e-05
0.0001 3.14070856361856e-05
0.000316227766016838 4.47005102420103e-05
0.001 6.9917905571774e-05
0.00316227766016838 0.000353987268159566
0.01 0.00314219018288634
0.0316227766016838 0.191465663857074
0.1 26.8252202522034
0.316227766016838 2.60679414660856
1 44.3765932842854
3.16227766016838 91.2305233481847
};
\addlegendentry{fast \eqref{eq_inverse_approx}}
\end{axis}

\end{tikzpicture}
    \vspace{-8pt}
    \caption{Comparison of the spectral norms of covariance matrix estimates for various levels of noise, and using resampling \eqref{def_empirical_var}, our estimator \eqref{eq_var_upd}, and its faster approximation \eqref{eq_inverse_approx}.}
    \label{fig:variance_est}
\end{figure}
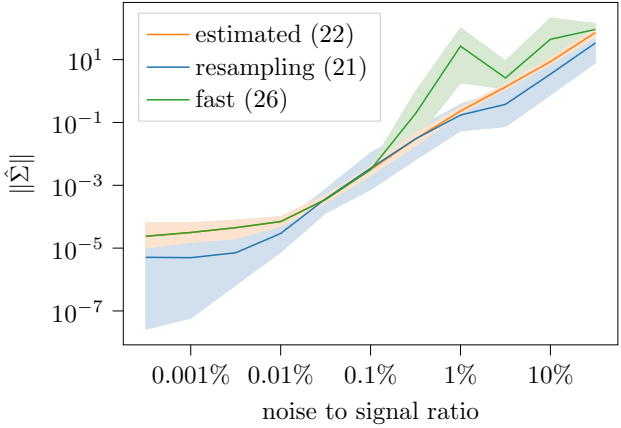

\subsection{A complex example}\label{subsec_results_envi}
As a last example, we describe a problem where all the aforementioned baseline methods fail to provide a meaningful estimate in a reasonable amount of time. We study a simplified model from \citep{watering_example}, which predicts the water usage by farms in a specific area. This model is defined by the following conditional probability distributions \eqref{eq_prob_water_def} and represented in graph form in Figure \ref{fig:water_example}. Inference tasks associated with this model are much more challenging than for \eqref{eq_prob_econ_def} due to the presence of log normal \eqref{eq_prob_water_def_a} and asymmetric Laplace \eqref{eq_prob_water_def_d} densities, as well as numerous relations between the variables. 

In this model, the root nodes $P_\symt$ and $D_\symt$ are: (i) $P_t$ the atmospheric pressure and (ii) $D_t = \sin^2\lt(\frac{\pi}{365} \frac{t}{T}\rt)$, which is a transformation of the day of the year. They influence the normalized sun irradiance $I_\symt$ and the amounts of rain $R_\symt$. Finally, the amount of water $W_\symt$ released by the system depends on the soil humidity $H_\symt$, which depends on the the sun irradiance in both present and past according to an auto-regressive model.

\begin{subequations}\label{eq_prob_water_def}
\begin{align}\label{eq_prob_water_def_a}
    p(P_\symt) &\propto e^{-\frac{\log(P_\symt)^2}{0.02}},
    \\ \label{eq_prob_water_def_c}
    p(I_\symt|D_\symt) &\propto e^{-105|I_\symt - D_\symt - 1| - 100(I_\symt - D_\symt - 1)},
    \\ \label{eq_prob_water_def_d}
    p(R_\symt|P_\symt,D_\symt) &\propto e^{-\frac{|(R_\symt - 3 P_\symt (1 - D_\symt)) + 50(|R_\symt|-R_\symt)|}{3}},
    \\ \label{eq_prob_water_def_e}
    p(H_\symt|I_\symt) &\propto e^{-\frac{\lt( H_\symt - 10 - \sum_{\symkit = 1}^{\symt} 0.9^{\symt-\symkit} I_\symkit \rt)^2}{0.02}},
    \\ \label{eq_prob_water_def_f}
    p(W_\symt|R_\symt, H_\symt) &\propto e^{-\frac{\lt(W_\symt - (H_\symt - R_\symt + 2) \rt)^2}{0.02}}.
\end{align}
\end{subequations}
Moreover, $D_\symt$ is exactly observed so we only introduce a non-informative prior \citep{noninformative_priors} for $p(D_\symt)$.
\begin{figure}[H]
    \centering
    \tikzset{every picture/.style={line width=0.75pt}} 

\begin{tikzpicture}[x=0.75pt,y=0.75pt,yscale=-1,xscale=1]

\draw   (226.03,70.9) .. controls (226.03,65.15) and (230.69,60.5) .. (236.43,60.5) .. controls (242.18,60.5) and (246.83,65.15) .. (246.83,70.9) .. controls (246.83,76.64) and (242.18,81.3) .. (236.43,81.3) .. controls (230.69,81.3) and (226.03,76.64) .. (226.03,70.9) -- cycle ;
\draw   (226.03,121.7) .. controls (226.03,115.95) and (230.69,111.3) .. (236.43,111.3) .. controls (242.18,111.3) and (246.83,115.95) .. (246.83,121.7) .. controls (246.83,127.44) and (242.18,132.1) .. (236.43,132.1) .. controls (230.69,132.1) and (226.03,127.44) .. (226.03,121.7) -- cycle ;
\draw    (236.43,81.3) -- (236.43,108.3) ;
\draw [shift={(236.43,111.3)}, rotate = 270] [fill={rgb, 255:red, 0; green, 0; blue, 0 }  ][line width=0.08]  [draw opacity=0] (7.14,-3.43) -- (0,0) -- (7.14,3.43) -- (4.74,0) -- cycle    ;
\draw   (201.2,21.06) .. controls (201.2,15.32) and (205.86,10.66) .. (211.6,10.66) .. controls (217.34,10.66) and (222,15.32) .. (222,21.06) .. controls (222,26.81) and (217.34,31.46) .. (211.6,31.46) .. controls (205.86,31.46) and (201.2,26.81) .. (201.2,21.06) -- cycle ;
\draw    (217,30.04) -- (231.62,58.37) ;
\draw [shift={(233,61.04)}, rotate = 242.7] [fill={rgb, 255:red, 0; green, 0; blue, 0 }  ][line width=0.08]  [draw opacity=0] (7.14,-3.43) -- (0,0) -- (7.14,3.43) -- (4.74,0) -- cycle    ;
\draw   (252.53,21.06) .. controls (252.53,15.32) and (257.19,10.66) .. (262.93,10.66) .. controls (268.68,10.66) and (273.33,15.32) .. (273.33,21.06) .. controls (273.33,26.81) and (268.68,31.46) .. (262.93,31.46) .. controls (257.19,31.46) and (252.53,26.81) .. (252.53,21.06) -- cycle ;
\draw    (258,30.04) -- (242.12,58.74) ;
\draw [shift={(240.67,61.37)}, rotate = 298.95] [fill={rgb, 255:red, 0; green, 0; blue, 0 }  ][line width=0.08]  [draw opacity=0] (7.14,-3.43) -- (0,0) -- (7.14,3.43) -- (4.74,0) -- cycle    ;
\draw   (278.03,70.9) .. controls (278.03,65.15) and (282.69,60.5) .. (288.43,60.5) .. controls (294.18,60.5) and (298.83,65.15) .. (298.83,70.9) .. controls (298.83,76.64) and (294.18,81.3) .. (288.43,81.3) .. controls (282.69,81.3) and (278.03,76.64) .. (278.03,70.9) -- cycle ;
\draw    (269,29.37) -- (283.62,57.7) ;
\draw [shift={(285,60.37)}, rotate = 242.7] [fill={rgb, 255:red, 0; green, 0; blue, 0 }  ][line width=0.08]  [draw opacity=0] (7.14,-3.43) -- (0,0) -- (7.14,3.43) -- (4.74,0) -- cycle    ;
\draw   (278.37,122.36) .. controls (278.37,116.62) and (283.02,111.96) .. (288.77,111.96) .. controls (294.51,111.96) and (299.17,116.62) .. (299.17,122.36) .. controls (299.17,128.11) and (294.51,132.76) .. (288.77,132.76) .. controls (283.02,132.76) and (278.37,128.11) .. (278.37,122.36) -- cycle ;
\draw    (288.77,81.96) -- (288.77,108.96) ;
\draw [shift={(288.77,111.96)}, rotate = 270] [fill={rgb, 255:red, 0; green, 0; blue, 0 }  ][line width=0.08]  [draw opacity=0] (7.14,-3.43) -- (0,0) -- (7.14,3.43) -- (4.74,0) -- cycle    ;
\draw    (278.37,122.36) -- (249.83,121.76) ;
\draw [shift={(246.83,121.7)}, rotate = 1.21] [fill={rgb, 255:red, 0; green, 0; blue, 0 }  ][line width=0.08]  [draw opacity=0] (7.14,-3.43) -- (0,0) -- (7.14,3.43) -- (4.74,0) -- cycle    ;

\draw (230.33,65.63) node [anchor=north west][inner sep=0.75pt]  [font=\footnotesize] [align=left] {$\displaystyle R$};
\draw (205.67,15.8) node [anchor=north west][inner sep=0.75pt]  [font=\footnotesize] [align=left] {$\displaystyle P$};
\draw (256,14.96) node [anchor=north west][inner sep=0.75pt]  [font=\footnotesize] [align=left] {$\displaystyle D$};
\draw (284,65.63) node [anchor=north west][inner sep=0.75pt]  [font=\footnotesize] [align=left] {$\displaystyle I$};
\draw (283.17,116.93) node [anchor=north west][inner sep=0.75pt]  [font=\footnotesize] [align=left] {H};
\draw (229.83,116.6) node [anchor=north west][inner sep=0.75pt]  [font=\footnotesize] [align=left] {W};

\end{tikzpicture}
    \vspace{-7pt}
    \caption{Example of an environmental Bayesian network model for water use in agriculture.}
    \label{fig:water_example}
\end{figure}
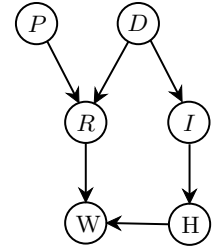

We generate the data by sampling the distributions in \eqref{eq_prob_water_def} for each $\symt=1,\dots, T$. The inference task consists in estimating $R_\symt$ and $I_\symt$ using the values of $P_\symt$, $H_\symt$, $W_\symt$, and $D_\symt$ for all $\symt=1,\dots, T$. The convergence speed and robustness to noisy data are shown in Fig. \ref{fig:comparison_water}, where AIRLS shows a similar favorable performance as in Section \ref{subsec_results_econ}. The comparison with ZOGD is absent because it converges too slowly.

\begin{figure}[H]
     \begin{subfigure}{0.235\textwidth}
\begin{tikzpicture}[scale=0.48]

\definecolor{darkgray176}{RGB}{176,176,176}
\definecolor{steelblue31119180}{RGB}{31,119,180}

\begin{axis}[
log basis x={10},
tick align=outside,
tick pos=left,
xlabel = {\large{number of dimensions}}, ylabel = {\large{time [s]}},
x grid style={darkgray176},
xmin=0, xmax=96.2039683037468,
xtick style={color=black},
xtick={20,50,80},
xticklabels={
  \(\displaystyle {20}\),
  \(\displaystyle {50}\),
  \(\displaystyle {80}\)
},
y grid style={darkgray176},
ymin=0.00915356318155924, ymax=0.564226239522298,
ytick style={color=black}
]
\path [draw=steelblue31119180, fill=steelblue31119180, opacity=0.2]
(axis cs:2,0.0343841393788656)
--(axis cs:2,0.0343841393788656)
--(axis cs:4,0.0387824853261312)
--(axis cs:10,0.055606468518575)
--(axis cs:16,0.0711291948954264)
--(axis cs:25,0.102320432662964)
--(axis cs:50,0.2447420835495)
--(axis cs:80,0.538995663324992)
--(axis cs:80,0.538995663324992)
--(axis cs:80,0.538995663324992)
--(axis cs:50,0.2447420835495)
--(axis cs:25,0.102320432662964)
--(axis cs:16,0.0711291948954264)
--(axis cs:10,0.055606468518575)
--(axis cs:4,0.0387824853261312)
--(axis cs:2,0.0343841393788656)
--cycle;

\addplot [semithick, steelblue31119180, opacity=1.0]
table {%
2 0.0343841393788656
4 0.0387824853261312
10 0.055606468518575
16 0.0711291948954264
25 0.102320432662964
50 0.2447420835495
80 0.538995663324992
};
\end{axis}

\end{tikzpicture}
    \caption{scaling}
    \label{fig:scaling_water}
    \end{subfigure}
     \begin{subfigure}{0.235\textwidth}
\begin{tikzpicture}[scale=0.48]

\definecolor{darkgray176}{RGB}{176,176,176}
\definecolor{steelblue31119180}{RGB}{31,119,180}

\begin{axis}[
log basis x={10},
log basis y={10},
tick align=outside,
tick pos=left,
xlabel = \large{noise to signal ratio [\%]}, ylabel = \large{RRMS Error [\%]},
x grid style={darkgray176},
xmin=4.63533893804407e-14, xmax=4.31467909193264,
xmode=log,
xtick style={color=black},
xtick={1e-16,1e-14,1e-12,1e-10,1e-08,1e-06,0.0001,0.01,1,100,10000},
xticklabels={
  \(\displaystyle {10^{-14}}\),
  \(\displaystyle {10^{-12}}\),
  \(\displaystyle {10^{-10}}\),
  \(\displaystyle {10^{-8}}\),
  \(\displaystyle {10^{-6}}\),
  \(\displaystyle {10^{-4}}\),
  \(\displaystyle {10^{-2}}\),
  \(\displaystyle {10^{0}}\),
  \(\displaystyle {10^{2}}\),
  \(\displaystyle {10^{4}}\),
  \(\displaystyle {10^{6}}\)
},
y grid style={darkgray176},
ymin=7.4255046050952e-15, ymax=1.25407352109709,
ymode=log,
ytick style={color=black},
ytick={1e-19,1e-17,1e-15,1e-13,1e-11,1e-09,1e-07,1e-05,0.001,0.1,10},
yticklabels={
  \(\displaystyle {10^{-17}}\),
  \(\displaystyle {10^{-15}}\),
  \(\displaystyle {10^{-13}}\),
  \(\displaystyle {10^{-11}}\),
  \(\displaystyle {10^{-9}}\),
  \(\displaystyle {10^{-7}}\),
  \(\displaystyle {10^{-5}}\),
  \(\displaystyle {10^{-3}}\),
  \(\displaystyle {10^{-1}}\),
  \(\displaystyle {10^{1}}\),
  \(\displaystyle {10^{3}}\)
}
]
\path [draw=fillblue, fill=fillblue, opacity=1.0]
(axis cs:2e-13,8.2918112679827e-14)
--(axis cs:5e-13,8.201217482713e-14)
--(axis cs:1e-12,1.66255897929896e-13)
--(axis cs:1e-11,1.66093805298809e-12)
--(axis cs:1e-10,1.66138879658691e-11)
--(axis cs:1e-09,1.66144246189106e-10)
--(axis cs:1e-08,1.66145012712357e-09)
--(axis cs:1e-07,1.66144249118863e-08)
--(axis cs:1e-06,1.66137098145649e-07)
--(axis cs:1e-05,1.66070409669202e-06)
--(axis cs:0.0001,1.65499391200231e-05)
--(axis cs:0.001,0.000165834188211538)
--(axis cs:0.01,0.00165839367931608)
--(axis cs:0.1,0.0193958597702849)
--(axis cs:1,0.136231923021226)
--(axis cs:1,0.282887685469923)
--(axis cs:1,0.282887685469923)
--(axis cs:0.1,0.110900142874366)
--(axis cs:0.01,0.0131027262403131)
--(axis cs:0.001,0.0258045482732686)
--(axis cs:0.0001,4.03092295270828e-05)
--(axis cs:1e-05,1.09492523193982e-05)
--(axis cs:1e-06,4.03126365625136e-07)
--(axis cs:1e-07,4.03126753727557e-08)
--(axis cs:1e-08,4.03126711465997e-09)
--(axis cs:1e-09,4.03129696236776e-10)
--(axis cs:1e-10,4.03134414733806e-11)
--(axis cs:1e-11,4.03218014532478e-12)
--(axis cs:1e-12,4.04215549920595e-13)
--(axis cs:5e-13,2.02582395303339e-13)
--(axis cs:2e-13,2.88163382044973e-13)
--cycle;

\addplot [semithick, steelblue31119180, opacity=1.0]
table {%
2e-13 1.52376999221722e-13
5e-13 1.43861404870139e-13
1e-12 2.87244741854088e-13
1e-11 2.87108830652282e-12
1e-10 2.87055504437714e-11
1e-09 2.870491491584e-10
1e-08 2.87048310001036e-09
1e-07 2.8704600319065e-08
1e-06 2.87015185524964e-07
1e-05 3.24196600745932e-06
0.0001 2.97305782565266e-05
0.001 0.00113496534016512
0.01 0.00335949456776592
0.1 0.0540913602606923
1 0.201898685854633
};
\end{axis}

\end{tikzpicture}
    \vspace{-14pt}
    \caption{robustness}
    \label{fig:robust_water}
    \end{subfigure}
    
    \caption{Numerical results of Algorithm \ref{alg_airls_def} applied to the example \eqref{eq_prob_water_def} for scaling with $T$, and for robustness to noise. The subfigures show (a) the computation time as a function of $T$ and (b) the RRMS error of the estimate of $R_t$ and $I_t$ depending on the average noise in the other variables.}
    \label{fig:comparison_water}
\end{figure}
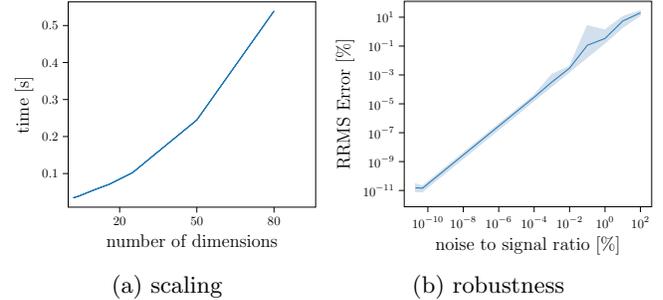




\section{Conclusions}\label{section_conclu}

MLE problems are ubiquitous but can be very challenging to compute when some variables in the problem do not follow well-studied distributions. Moreover, state-of-the-art methods can be slow to solve high-dimensional problems. In this paper, we propose a simple likelihood optimization method, which converges for a wide variety of problems, and produces estimates very close to the optimum. We also provide an algorithm to compute the variance of these estimates efficiently, and show how to apply the whole method to a large class of statistical models. 

While we provide \textcolor{redrev1}{a convergence proof and} an optimality guarantee on the fixed points when all the variables follow GNDs, other classes of distributions will be investigated. Future work on AIRLS will also aim at characterizing the convergence speed analytically, in order to explain the empirically observed super-linear rate.


\bibliographystyle{plainnat}
\bibliography{references}

\section*{Appendix}
\subsection*{Details of Example \ref{example_generalized_pca}}
\cite{generalized_pca_vidal} provides an expression for the fitting problem in the stochastic case. With all points $\phi_\symi$ corrupted by i.i.d. Gaussian noise, the problem is written as
\begin{align*}
    \argmin_{x_{\textcolor{redrev1}{1}}, \dots, x_n} \sum_{\symi=\textcolor{redrev1}{1}}^M\prod_{\symh=\textcolor{redrev1}{1}}^n (\phi_\symi^\top x_\symh)^2 = \argmax_{x_{\textcolor{redrev1}{1}}, \dots, x_n} -\sum_{\symi=\textcolor{redrev1}{1}}^M \prod_{\symh=\textcolor{redrev1}{1}}^n (\phi_\symi^\top x_\symh)^2,
\end{align*}
or equivalently
\begin{align}\label{eq_app_details_generalized_pca}
    \argmax_{x_{\textcolor{redrev1}{1}}, \dots, x_n} \sum_{\symi=\textcolor{redrev1}{1}}^M \ln &\lt( e^{-\lt(\prod_{\symh=\textcolor{redrev1}{1}}^n \phi_\symi^\top x_\symh\rt)^2}\rt)
    \\ \nonumber
    &= \argmax_{x_{\textcolor{redrev1}{1}}, \dots, x_n} \ln \lt( \prod_{\symi=\textcolor{redrev1}{1}}^M e^{-\lt(\prod_{\symh=\textcolor{redrev1}{1}}^n \phi_\symi^\top x_\symh\rt)^2}\rt).
\end{align}
Taking the exponential of \eqref{eq_app_details_generalized_pca} does not change the optimizers $x_{\textcolor{redrev1}{1}}, \dots, x_n$. With $\textcolor{redrev1}{p(\cdot) = \sqrt{\pi^{-1}}e^{-(\cdot)^2}}$, \eqref{eq_app_details_generalized_pca} is therefore equivalent to the problem given in Example \ref{example_generalized_pca}.

\subsection*{Details of Examples \ref{example_eiv_sysid} and \ref{example_low_rank_tensor}}
Both examples are constituted of two matrices of measurements $Z_1$ and $Z_2$, whose noises are assumed to be independent. In Example \ref{example_eiv_sysid}, one tries to fit (i) $Z_2 = X_1 X_0 + \varepsilon_0$ and (ii) $\textcolor{redrev1}{X_1} = Z_1 + \varepsilon_1$, where each element \textcolor{redrev1}{$\varepsilon_{0\symt\symi} = Z_{2\symt\symi} - \sum_{\symh = 1}^n X_{1\symt\symh}X_{0\symh\symi}$ and $\varepsilon_{1\symt\symi} = X_{1\symt\symi} - Z_{1\symt\symi}$} of the matrices $\varepsilon_0$ and $\varepsilon_1$ have density $p_{0\textcolor{redrev1}{\symt \symi}}$ and $p_{1\textcolor{redrev1}{\symt \symi}}$, respectively. In this case, the joint likelihood is equal to
\begin{align*}
    \prod_{\symt = 1}^T \prod_{\symi=1}^n p_{0\textcolor{redrev1}{\symt \symi}}(\varepsilon_{0\textcolor{redrev1}{\symt \symi}})p_{1\textcolor{redrev1}{\symt \symi}}(\varepsilon_{1\textcolor{redrev1}{\symt \symi}})
\end{align*}
Plugging each elements of the matrix regression models (i) and (ii) yields the maximum likelihood shown in Example \ref{example_eiv_sysid}. Note that when all elements of $\varepsilon_0$ and $\varepsilon_1$ are Gaussian and i.i.d., the negative log-likelihood has the well-known form
\begin{align*}
    \|Z_2 - X_1 X_0\|_F^2 + \|X_1 - Z_1\|_F^2.
\end{align*}

Example \ref{example_low_rank_tensor} only contains one regression problem $Z=\Phi X + \varepsilon$. However, as tensors can get quite high dimensional, the parameter $X$ is often constrained to have a certain rank. This can be done by expression $X$ as the product of smaller-sized tensors. For simplicity, consider 2-dimensional tensors, i.e., matrices. If $X \in \bb R^{n\times n}$, any $X$ of rank $r \leq n$ can be expressed as $X = X_1 X_2$, where $X_1 \in \bb R^{n \times r}$ and $X_2 \in \bb R^{r \times n}$. This gives the regression $Z=\Phi X_1 X_2 + \varepsilon$, where each element of the noise $\varepsilon_{\textcolor{redrev1}{\symt \symi}}$ is distributed according to the density $p_{\textcolor{redrev1}{\symt \symi}}$. Plugging this regression model element-wise into \eqref{eq_intro_og_prob} yields the MLE problem given in Example \ref{example_low_rank_tensor}.

\end{document}